\journal{Neural Networks}
\newcommand{\vect}[1]{\mathbf{#1}}
\newtheorem{thm}{Theorem}[]
\newtheorem{thmA}{Theorem}[]
\newtheorem{lem}{Lemma}[]
\newtheorem{prop}{Proposition}[]
\newtheorem{defn}{Definition}[]
\DeclareMathOperator*{\argmax}{arg\,max}
\DeclareMathOperator*{\argmin}{arg\,min}
\DeclareRobustCommand{\cc}[1]{{#1}}
\soulregister{\cc}{1}
\begin{document}

\begin{frontmatter}

\title{Adversarial Parameter Defense by Multi-Step Risk Minimization}

\author[address1]{Zhiyuan Zhang}
\ead{zzy1210@pku.edu.cn}
\author[address2]{Ruixuan Luo}
\ead{luoruixuan97@pku.edu.cn}
\author[address1]{Xuancheng Ren}
\ead{renxc@pku.edu.cn}
\author[address3,address1]{Qi Su}
\ead{sukia@pku.edu.cn}
\author[address4]{Liangyou Li}
\ead{liliangyou@huawei.com}
\author[address1,address2]{Xu Sun}
\ead{xusun@pku.edu.cn}

\address[address1]{MOE Key Laboratory of Computational Linguistics, School of EECS, Peking University, Beijing, China.}
\address[address2]{Center for Data Science, Peking University, Beijing, China.}
\address[address3]{School of Foreign Languages, Peking University, Beijing, China.}
\address[address4]{Huawei Noah’s Ark Lab, Hong Kong, China.}

\begin{abstract}
Previous studies demonstrate DNNs' vulnerability to adversarial examples and adversarial training can establish a defense to adversarial examples. In addition, recent studies show that deep neural networks also exhibit vulnerability to parameter corruptions. The vulnerability of model parameters is of crucial value to the study of model robustness and generalization. In this work, we introduce the concept of parameter corruption and propose to leverage the loss change indicators for measuring the flatness of the loss basin and the parameter robustness of neural network parameters. On such basis, we analyze parameter corruptions and propose the multi-step adversarial corruption algorithm. To enhance neural networks, we propose the adversarial parameter defense algorithm that minimizes the average risk of multiple adversarial parameter corruptions. Experimental results show that the proposed algorithm can improve both the parameter robustness and accuracy of neural networks.
\end{abstract}
\begin{keyword}
Vulnerability of Deep Neural Networks, Parameter Corruption, Adversarial Parameter Defense
\end{keyword}
\end{frontmatter}

\section{Introduction}

Deep neural networks (DNNs) have made striking breakthroughs across many application domains, such as computer vision (CV)~\citep{resnet}, natural language processing (NLP)~\citep{transformer}, and speech recognition~\citep{Speech_Recognition_seq2seq}. Despite the promising performance of DNNs, DNNs are found vulnerable to adversarial examples~\citep{Explaining_and_Harnessing_Adversarial_Examples,Adversarial_examples_in_the_physical_world,Deepfool}, i.e., simple perturbations to the input data can mislead models substantially. Adversarial training~\citep{Explaining_and_Harnessing_Adversarial_Examples,Towards_Evaluating_the_Robustness_of_Neural_Networks,YOPO,freeLB,Unified-min-max} is conducted to enhance the robustness and accuracy of deep neural networks, making it more applicable in real-world scenarios. Besides adversarial examples, parameter corruptions can also threaten neural networks. For neural networks deployed on electronic computers, parameter corruptions may occur in the forms of training data poisoning~\citep{badnet, backdoor1, backdoor2, backdoor-Bert}, bit flipping~\citep{TBT}, compression~\citep{Stronger_Generalization_Compression} or parameter quantization~\citep{Data-Free-Quant,tensorRT,parameter_L1}. For neural networks deployed in physical devices\cite{optical-NN, survey_hardware,Efficient_FPGA-based,digital_Hardware,amplifier_based_MOS_Analog_NN,Scalable_NN_on_chip}, parameter corruptions occur as hardware deterioration and background noise. Our previous work~\citep{attack-paper} shows that deep neural networks are severely affected by adversarial parameter corruptions and proposes to probe the robustness of different parameters via parameter corruption. In this work, we further analyze parameter corruption and study \textit{adversarial parameter defense}.

To evaluate the parameter robustness, we propose indicators of measuring the loss change caused by parameter corruptions. On its basis, we analyze the distribution of the random parameter corruptions and propose the multi-step adversarial corruption algorithm. Intuitively, the loss change shows the flatness of the loss surface in the neighborhood of the current parameter, as illustrated in Figure~\ref{fig:loss_curve}. Here point $A$ is a flat minimum and point $B$ is a sharp minimum. Traditional learning algorithms focus on obtaining lower loss, which means generally the parameters at point $B$ are preferred. However, point $A$ is a flat minimum and has better parameter robustness. Recent studies~\citep{On_Large-Batch_Training,Entropy-SGD} indicate that flat minima tend to have better generalization ability. Therefore, point $A$ is a better choice to gain both better parameter robustness and better generalization ability.

\begin{figure}[!t]
\centering
\includegraphics[height=8\baselineskip]{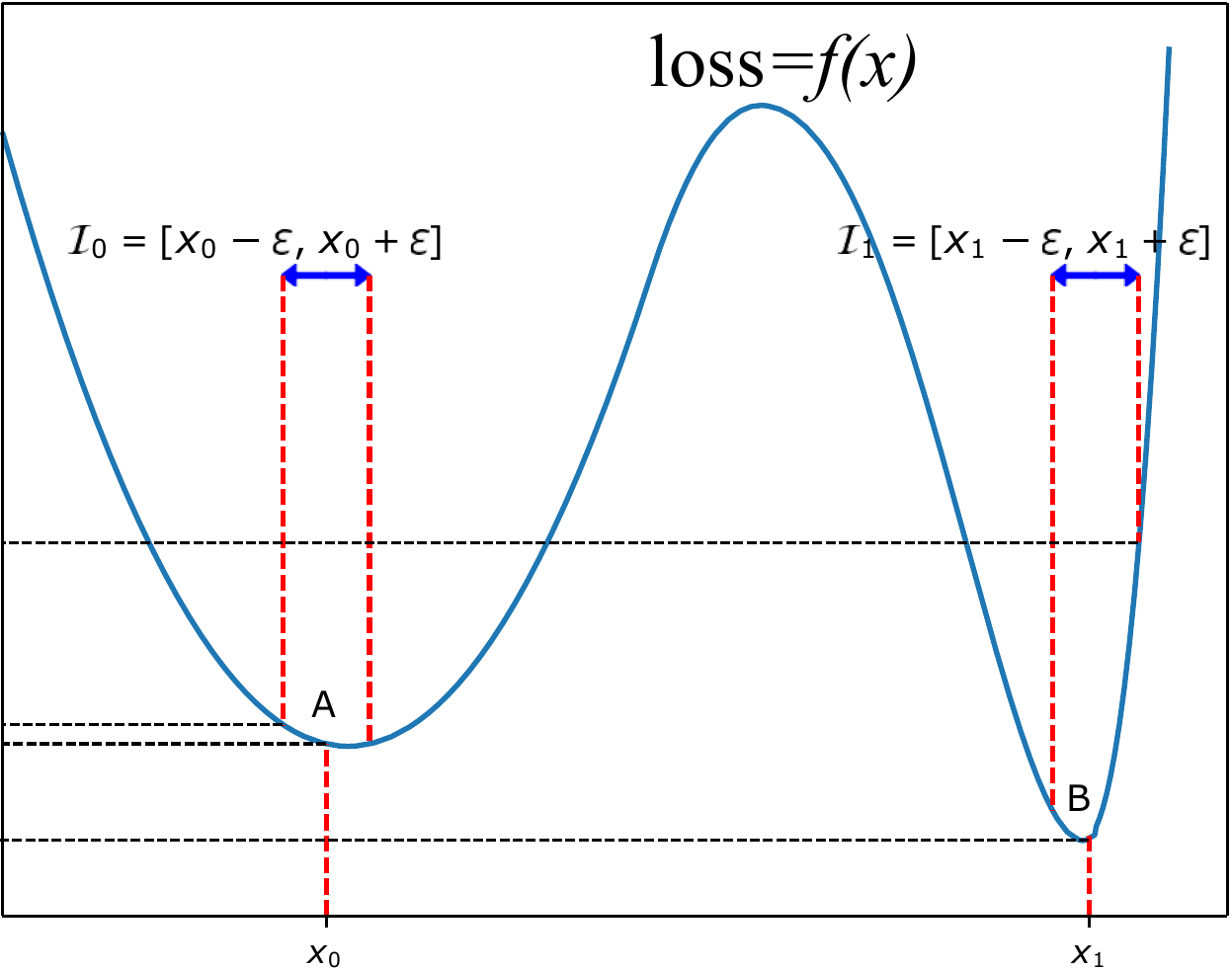}
\caption{In the illustration of the loss function from our previous work~\citep{attack-paper}, traditional optimizer prefers $B$ with the lower loss. However, point $A$ is a flat minimum and has a better parameter robustness as $\max_{x\in\mathcal{I}_0}(f(x)-f(x_0))< \max_{x\in\mathcal{I}_1}(f(x)-f(x_1))$. Therefore, point $A$ is a better choice to gain both better parameter robustness and better generalization ability.}
\label{fig:loss_curve}
\end{figure}

Loss change indicators can reveal the vulnerability of neural network parameters, which leads to poor robustness against parameter corruption and harms the generalization performance. Therefore, we propose to drive the parameters from the area with steep surroundings with the aim to improve both the accuracy and the parameter robustness of models. Figure~\ref{fig:error} illustrates our motivation. 
The goal of the algorithm is to minimize the risk of adversarial parameter corruption so as to ensure a stable accuracy even with parameter corruptions. To this end, we propose a novel approach for minimizing the risk based on multi-step adversarial parameter corruptions. A single parameter corruption is decomposed into multiple steps so that for each step, the norm of the perturbation is constrained, facilitating a more accurate gradient-based estimation. By incorporating the risk estimation into the adversarial training, we can effectively defend the neural network against parameter corruption, which may result in a better performance in terms of both robustness and accuracy. We also provide a theoretical analysis of the relation between the indicators related to the parameter defense algorithm and the generalization error bound. However, we find that not all parts of parameters are enhanced by the proposed defense algorithm, which means defending the whole model is not proper for improving the robustness of specific parameters in neural networks. To solve this issue, we further propose a localized version of the parameter defense method, considering corruptions only to the target parameters.

\begin{figure}[!t]
  \begin{minipage}[c]{\linewidth}
    \includegraphics[width=0.95\linewidth]{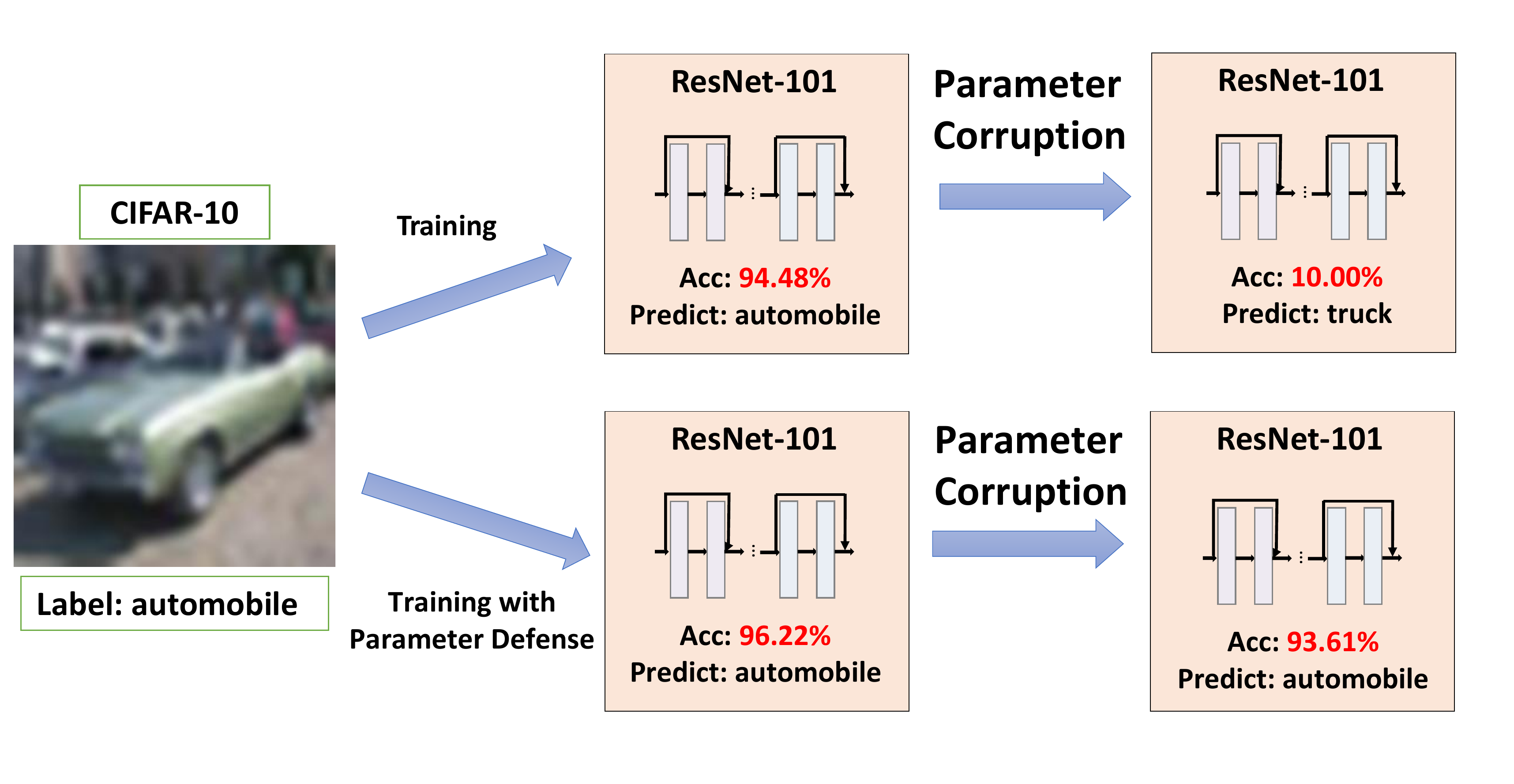}
  \end{minipage}\hfill
  \begin{minipage}[c]{\linewidth}
    \caption{Neural networks are sensitive to adversarial parameter corruptions. The proposed parameter defense algorithm can improve both the accuracy and the robustness of neural networks. The accuracy is measured on the test set and the robustness is measured as the accuracy after parameter corruptions ($\epsilon$=0.0005, $L_{+\infty}$, $n$=100).
    } \label{fig:error}
  \end{minipage}
\end{figure}

Our contributions are summarized as follows:
\begin{itemize}
\item To understand the parameter vulnerability of deep neural networks, which is fundamentally related to model robustness and generalization, we introduce the concept of parameter corruption and propose to leverage the loss change indicators. On its basis, we analyze the distribution of the random parameter corruptions and propose the multi-step adversarial corruption algorithm. 
\item  We propose the adversarial parameter defense algorithm, which minimizes the risk of adversarial parameter corruption. To estimate such risk, we propose a multi-step adversarial corruption algorithm using consecutive virtual parameter perturbations. Experimental results show that our defense algorithm can improve both the accuracy and robustness with multiple parameter corruption methods.

\end{itemize}

\section{Parameter Corruption}
\label{sec:method}

In this section, we introduce the problem of parameter corruption and the proposed indicators. Then, we analyze the distribution of the random parameter corruptions and propose the multi-step adversarial corruption algorithm.

\subsection{Notations and Definitions}

Before delving into the specifics, we first introduce our notations. Let $\vect{w}\in\mathbb{R}^k$ denote the vector of $k$ parameters allowed to be corrupted and ${\bm\theta}$ denotes the parameters non-corrupted. $\mathcal{L}(\vect{w}; \mathcal{D})$ denote the loss function on the dataset $\mathcal{D}$, regarding the specific parameter subspace $\vect{w}$. The loss function can also be written as $\mathcal{L}(\vect{w}; (\vect{x}, y))$ or $\mathcal{L}((\vect{w}, {\bm\theta}); \mathcal{D})$, where $(\vect{x}, y)$ denotes a data instance. Taking a $k$-dimensional subspace allows a more general analysis on a specific group of parameters.

To expose the vulnerability of parameters, we propose to adopt the approach of parameter corruption. To formally analyze its effect on neural networks, we formulate the parameter corruption as a small perturbation $\vect{a}\in\mathbb{R}^k$ to the parameter vector $\vect{w}$. The corrupted parameter is $\vect{w}+\vect{a}$. The corruption is specified by a constraint set.

\begin{defn}[Corruption Constraint]
The corruption constraint is specified by the set:
\begin{align}
S=\{\vect{a}:\|\vect{a}\|_p\le \epsilon \text{ and }\|\vect{a}\|_0\le n\},
\end{align}
where $\|\cdot\|_0$ denotes the number of non-zero elements in a vector and $1\le n\le k$ denotes the maximum number of corrupted parameters. $\epsilon$ is a small positive real number and $\|\cdot\|_p$ denotes the $L_p$-norm where $p\ge 1$ such that $\|\cdot\|_p$ is a valid distance. The corruption constraint can also be specified by $S=\{\vect{a}:\|\vect{a}\|_p= \epsilon \text{ and }\|\vect{a}\|_0\le n\}$ occasionally.
\end{defn}

For example, $S=\{\vect{a}:\|\vect{a}\|_2\le \epsilon\}$ specifies that the parameter corruption should be in a $k$-dimensional hyperball with a radius of $\epsilon$, where the number of corrupted parameters is not limited and $n=k$. 

Suppose $\Delta\mathcal{L}(\vect{w}, \vect{a}; \mathcal{D})=\mathcal{L}(\vect{w}+\vect{a}; \mathcal{D})-\mathcal{L}(\vect{w}; \mathcal{D})$ denotes the loss change after the parameter corruption. To evaluate the effect of parameter corruption, we propose the average loss change indicator and the maximum loss change indicator under the corruption constraints. The optimal parameter corruption is defined accordingly, which considers the worst-case scenario.

\begin{defn}[Indicators and Optimal Parameter Corruption]
The average loss change indicator $\Delta_\text{ave}\mathcal{L}(\vect{w}, S, \mathcal{D})$, the maximum loss change indicator $\Delta_\text{max}\mathcal{L}(\vect{w}, S, \mathcal{D})$, and the optimal parameter corruption $\vect{a}^*$ are defined as:
\begin{align}
\Delta_\text{ave}\mathcal{L}(\vect{w}, S, \mathcal{D})=\mathbb{E}_{\vect{a}\in S}[\Delta\mathcal{L}(\vect{w}, \vect{a}, \mathcal{D})], \\
\Delta_\text{max}\mathcal{L}(\vect{w}, S, \mathcal{D})=\max\limits_{\vect{a}\in S}\Delta\mathcal{L}(\vect{w}, \vect{a}, \mathcal{D}), \\
\vect{a}^*=\argmax_{\vect{a}\in S}\Delta\mathcal{L}(\vect{w}, \vect{a}, \mathcal{D}).
\end{align}
\end{defn}

Let $\vect{g}$ denote $\nicefrac{\partial\mathcal{L}(\vect{w}; \mathcal{D})}{\partial \vect{w}}$ and $\textbf{H}$ denote the Hessian matrix; suppose $\|\vect{g}\|_2=G>0$. Using the second-order Taylor expansion, we estimate the loss change and the proposed indicators:
\begin{equation}
\Delta\mathcal{L}(\vect{w},\vect{a}; \mathcal{D}) = \vect{a}^\text{T}\vect{g}+\frac{1}{2}\vect{a}^\text{T}\textbf{H}\vect{a}+o(\epsilon^2)=f(\vect{a})+o(\epsilon).
\end{equation}

Here, $f(\vect{a})=\vect{a}^\text{T}\vect{g}$ is a first-order estimation of the loss change $\Delta\mathcal{L}(\vect{w}, \vect{a}; \mathcal{D})$ and meanwhile the inner product of the parameter corruption $\vect{a}$ and the gradient $\vect{g}$, based on which, gradient-based corruption algorithm estimates the the optimal parameter corruption by maximizing the alternative inner product instead of the initial loss change.

We first analyze the random parameter corruption and the gradient-based corruption to understand the effect of parameter corruption. Then we propose the multi-step adversarial corruption algorithm. The detailed theoretical analysis and proofs are in Appendix D.

\subsection{Analysis of Random Corruption}

We first analyze the random case. We discuss the characteristics of the loss change caused by random corruption under a representative corruption constraint in Theorem~\ref{thm:random}. 

\begin{thm}[Distribution of Random Corruption]
\label{thm:random}
Given the constraint set  $S=\{\vect{a}:\|\vect{a}\|_2= \epsilon\}$ and a generated random corruption $\vect{\tilde a}$, which in turn obeys a uniform distribution on $\|\vect{\tilde a}\|_2=\epsilon$. The estimation of $\Delta_\text{ave}\mathcal{L}(\vect{w}, S, \mathcal{D})$ and $\Delta_\text{max}\mathcal{L}(\vect{w}, S, \mathcal{D})$ are:
\begin{align}
\Delta_\text{ave}\mathcal{L}(\vect{w}, S, \mathcal{D})&=\frac{tr(\textbf{H})}{2k}\epsilon^2+o(\epsilon^2),\\
\Delta_\text{max}\mathcal{L}(\vect{w}, S, \mathcal{D})&=\epsilon G+o(\epsilon) \label{eq:expectation}.
\end{align}
Define $\eta=\nicefrac{|\vect{\tilde a}^\text{T}\vect{g}|}{\epsilon G}$, which is a first-order estimation of  $\nicefrac{|\Delta\mathcal{L}(\vect{w}, \vect{\tilde a}, \mathcal{D})| }{\Delta_\text{max}\mathcal{L}(\vect{w}, S, \mathcal{D})}$ and $\eta\in [0, 1]$, then the probability density function $p_\eta(x)$ of $\eta$ and the cumulative density $P(\eta \le x)$ function of $\eta$ are:
\begin{align}
p_{\eta}(x)&=\frac{2\Gamma(\frac{k}{2})}{\sqrt{\pi}\Gamma(\frac{k-1}{2})}(1-x^2)^{\frac{k-3}{2}}, \label{equ:random_destiny1}\\
P(\eta \le x)&=\frac{2xF_1(\frac{1}{2}, \frac{3-k}{2};\frac{3}{2}; x^2)}{B(\frac{k-1}{2}, \frac{1}{2})}, \label{equ:random_destiny2}
\end{align}
where $k$ denotes the number of corrupted parameters, and $\Gamma(\cdot)$, $B(\cdot,\cdot)$ and $F_1(\cdot,\cdot;\cdot;\cdot)$ denote the gamma function, beta function and hyper-geometric function, respectively.
\end{thm}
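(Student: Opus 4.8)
The plan is to treat the two indicator formulas and the law of $\eta$ separately, in each case feeding the second-order Taylor expansion $\Delta\mathcal{L}(\vect{w},\vect{a}; \mathcal{D}) = \vect{a}^\text{T}\vect{g}+\frac{1}{2}\vect{a}^\text{T}\textbf{H}\vect{a}+o(\epsilon^2)$ only as far as each quantity requires. Throughout I would write $\vect{a}=\epsilon\vect{u}$ with $\vect{u}$ uniform on the unit sphere $S^{k-1}$, so that the only probabilistic input is the rotational symmetry of $\vect{u}$.

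For the average indicator I would take expectations term by term. Rotational symmetry gives $\expect[\vect{u}]=0$, so the linear term $\expect[\vect{a}^\text{T}\vect{g}]=\epsilon\,\expect[\vect{u}]^\text{T}\vect{g}$ vanishes. For the quadratic term the same symmetry forces $\expect[\vect{u}\vect{u}^\text{T}]=\frac{1}{k}\mat{I}$ (the off-diagonals vanish under the sign flip $u_j\mapsto -u_j$, and the equal diagonal entries sum to $\expect[\|\vect{u}\|_2^2]=1$), hence $\expect[\vect{a}^\text{T}\textbf{H}\vect{a}]=\epsilon^2\,tr(\textbf{H}\,\expect[\vect{u}\vect{u}^\text{T}])=\frac{\epsilon^2}{k}tr(\textbf{H})$, which after the factor $\tfrac12$ yields the claimed $\frac{tr(\textbf{H})}{2k}\epsilon^2+o(\epsilon^2)$. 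For the maximum indicator the first-order term dominates: maximizing $\vect{a}^\text{T}\vect{g}$ over $\|\vect{a}\|_2=\epsilon$ is a Cauchy--Schwarz problem with optimum $\epsilon G$ attained at $\vect{a}=\epsilon\vect{g}/G$, while the quadratic contribution is $O(\epsilon^2)=o(\epsilon)$, giving $\epsilon G+o(\epsilon)$.

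The substantive part is the distribution of $\eta=|\vect{\tilde a}^\text{T}\vect{g}|/(\epsilon G)=|\vect{u}^\text{T}\hat{\vect{g}}|$, where $\hat{\vect{g}}=\vect{g}/G$ is a fixed unit vector. By rotational invariance of the uniform measure on $S^{k-1}$ I may rotate $\hat{\vect{g}}$ onto the first coordinate axis, reducing $\eta$ to $|u_1|$, the absolute value of a single coordinate of a uniformly random unit vector. To obtain its law I would use the Gaussian representation $\vect{u}=\vect{X}/\|\vect{X}\|_2$ with $\vect{X}\sim N(0,\mat{I}_k)$, so that $u_1^2=X_1^2/(X_1^2+\sum_{i\ge 2}X_i^2)$; since $X_1^2\sim\chi^2_1$ and $\sum_{i\ge 2}X_i^2\sim\chi^2_{k-1}$ are independent, $\eta^2=u_1^2$ follows a $\mathrm{Beta}(\tfrac{1}{2},\tfrac{k-1}{2})$ law. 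The change of variables $y=x^2$ then converts the Beta density into $p_\eta(x)=\frac{2}{B(\frac{1}{2},\frac{k-1}{2})}(1-x^2)^{(k-3)/2}$, and rewriting $B(\frac{1}{2},\frac{k-1}{2})=\sqrt{\pi}\,\Gamma(\frac{k-1}{2})/\Gamma(\frac{k}{2})$ recovers (\ref{equ:random_destiny1}).

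Finally, for the cumulative function I would integrate this density directly, writing $P(\eta\le x)=\frac{2}{B(\frac{1}{2},\frac{k-1}{2})}\int_0^x(1-t^2)^{(k-3)/2}\,dt$. Expanding $(1-t^2)^{(k-3)/2}=\sum_{n\ge 0}\frac{((3-k)/2)_n}{n!}t^{2n}$ by the binomial series and integrating term by term gives $\int_0^x(1-t^2)^{(k-3)/2}\,dt=x\sum_{n\ge 0}\frac{((3-k)/2)_n}{n!\,(2n+1)}x^{2n}$; the key simplification is the Pochhammer identity $\frac{1}{2n+1}=\frac{(1/2)_n}{(3/2)_n}$, which lets me recognize the series as $x\,F_1(\frac{1}{2},\frac{3-k}{2};\frac{3}{2};x^2)$ and produces (\ref{equ:random_destiny2}). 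I expect the main obstacle to be exactly this last identification---pinning down the hypergeometric parameters and the $1/(2n+1)$-to-Pochhammer reduction---together with the clean justification that $\eta^2$ is Beta-distributed; once those are in place, the two indicator formulas follow by routine symmetry and Cauchy--Schwarz arguments.
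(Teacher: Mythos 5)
Your proposal is correct, and the two indicator formulas are handled exactly as in the paper: the same coordinatewise symmetry facts $\expect[a_i]=\expect[a_ia_j]=0$ and $\expect[a_i^2]=\epsilon^2/k$ (which you package as $\expect[\vect{u}\vect{u}^\text{T}]=\tfrac{1}{k}\mat{I}$) give the $\tfrac{tr(\textbf{H})}{2k}\epsilon^2$ term, and Cauchy--Schwarz on the first-order term gives $\epsilon G+o(\epsilon)$. Where you genuinely diverge is the law of $\eta$. The paper also reduces to a single coordinate by rotational invariance, but then parameterizes the sphere in hyperspherical angles, writes $P(\eta\le x)$ as a ratio of surface integrals $\int_0^{\alpha}\sin^{k-2}\phi\,d\phi\big/\int_0^{\pi}\sin^{k-2}\phi\,d\phi$, invokes the known closed form of $\int_0^{\alpha}\sin^{k-2}\phi\,d\phi$ in terms of $F_1$ to get the CDF, and only then differentiates to obtain the density. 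You go the other way: the Gaussian representation $\vect{u}=\vect{X}/\|\vect{X}\|_2$ identifies $\eta^2$ as $\mathrm{Beta}(\tfrac12,\tfrac{k-1}{2})$, which yields the density (\ref{equ:random_destiny1}) immediately by a one-line change of variables, and you then recover the CDF (\ref{equ:random_destiny2}) by termwise integration of the binomial series together with $\tfrac{1}{2n+1}=\tfrac{(1/2)_n}{(3/2)_n}$. Your route buys a cleaner, self-contained derivation of the density (no Jacobian of hyperspherical coordinates, no external integral table) at the cost of having to rebuild the hypergeometric series by hand; the paper's route gets the $F_1$ form of the CDF for free from a standard integral but leaves the density as a derivative of that expression. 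Both are sound, and your series manipulation does check out (it reproduces the paper's normalization, e.g.\ $P(\eta\le 1)=1$).
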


Theorem~\ref{thm:random} states that the average loss change indicator $\Delta_\text{ave}\mathcal{L}(\vect{w}, S, \mathcal{D})$ is an infinitesimal of higher order compared to the maximum loss change indicator $\Delta_\text{max}\mathcal{L}(\vect{w}, S, \mathcal{D})$ when $\epsilon$ approaches $0$. In addition, it is unlikely for multiple random trials to induce the optimal loss change. For a deep neural network, the number of corrupted parameters can be considerably large. According to Eq.(\ref{equ:random_destiny1}), $\eta$ will be concentrated near $0$. Thus, theoretically, it is not generally possible for the random corruption to cause substantial loss changes in this circumstance, making it ineffective in finding the vulnerability. We should consider more effective algorithms to parameter corruption.

\subsection{Gradient-based Corruption}

To arrive at the optimal parameter corruption that renders a fast and accurate estimation of the proposed maximum loss change indicator, we further propose a gradient-based method by maximizing the first-order estimation $f(\vect{a})=\vect{a}^\text{T}\vect{g}$ of the indicator. The gradient-based parameter corruption $\vect{\hat a}$ in $S$ has a closed-form solution:
\begin{align}
\vect{\hat a}=\argmax_{\vect{a}\in S}\vect{a}^\text{T}\vect{g} &= \epsilon\left(\text{sgn}(\vect{h})\odot\frac{|\vect{h}|^\frac{1}{p-1}}{\||\vect{h}|^\frac{1}{p-1}\|_p}\right); \label{equ:proposed}\\
f(\vect{\hat a})&=\vect{\hat a}^\text{T}\vect{g}=\epsilon\|\vect{h}\|_{\frac{p}{p-1}};
\end{align}
where $\vect{h}=\text{top}_n(\vect{r})$. The $\text{top}_n(\vect{v})$ function retains top-$n$ magnitude of all $|\vect{v}|$ dimensions and set other dimensions to $0$, $\text{sgn}(\cdot)$ denotes the signum function, $|\cdot|$ denotes the point-wise absolute function, and $(\cdot)^\alpha$ denotes the point-wise $\alpha$-power function.

The error bound of the gradient-based parameter corruption is described in Theorem~\ref{thm:bound}.

\begin{thm}[Error Bound of the Gradient-Based Estimation]
\label{thm:bound}
Suppose $\mathcal{L}(\vect{w};\mathcal{D})$ is convex and $L$-smooth with respect to $\vect{w}$ in the subspace $\{\vect{w}+\vect{a}:\vect{a}\in S\}$, where $S=\{\vect{a}:\|\vect{a}\|_p=\epsilon\text{ and }\|\vect{a}\|_0\le n\}$.\footnote{Note that $\mathcal{L}$ is only required to be convex and $L$-smooth in a neighbourhood of $\vect{w}$, instead of the entire $\mathbb{R}^k$.} Suppose $\vect{a^*}$ and $\vect{\hat a}$ are the optimal corruption and the gradient-based corruption in $S$ respectively. $\|\vect{g}\|_2=G>0$. It is easy to verify that $\mathcal{L}(\vect{w}+\vect{a^*};\mathcal{D})\ge \mathcal{L}(\vect{w+\vect{\hat a}};\mathcal{D})>\mathcal{L}(\vect{w};\mathcal{D})$ . It can be proved that the loss change of the gradient-based corruption is the same order infinitesimal of that of the optimal parameter corruption:
\begin{equation}
\frac{\Delta_\text{max}\mathcal{L}(\vect{w}, S; \mathcal{D})}{\Delta\mathcal{L}(\vect{w}, \vect{\hat a}; \mathcal{D})}=1+O\left(\frac{Ln^{g(p)}\sqrt{k}\epsilon}{G}\right);
\label{eq:bound}
\end{equation}
where $g(p)$ is formulated as $g(p)=\max\{\frac{p-4}{2p}, \frac{1-p}{p}\}$.
\end{thm}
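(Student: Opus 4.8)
The plan is to sandwich the optimal loss change $\Delta_\text{max}\mathcal{L}(\vect{w}, S; \mathcal{D})=\Delta\mathcal{L}(\vect{w}, \vect{a^*}; \mathcal{D})$ between the linear surrogate $f(\vect{\hat a})$ and itself plus a quadratic remainder, and then divide by the denominator. First I would invoke $L$-smoothness on the relevant neighbourhood: for every $\vect{a}\in S$ the descent lemma gives $\Delta\mathcal{L}(\vect{w}, \vect{a}; \mathcal{D})\le f(\vect{a})+\tfrac{L}{2}\|\vect{a}\|_2^2$, while convexity gives the matching lower bound $\Delta\mathcal{L}(\vect{w}, \vect{a}; \mathcal{D})\ge f(\vect{a})$. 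Since $\vect{\hat a}=\argmax_{\vect{a}\in S}f(\vect{a})$ and $\vect{a^*}\in S$, we also have $f(\vect{a^*})\le f(\vect{\hat a})$. Chaining these three facts yields
\begin{equation}
\Delta\mathcal{L}(\vect{w}, \vect{a^*}; \mathcal{D}) \le f(\vect{a^*})+\tfrac{L}{2}\|\vect{a^*}\|_2^2 \le f(\vect{\hat a})+\tfrac{L}{2}\|\vect{a^*}\|_2^2 \le \Delta\mathcal{L}(\vect{w}, \vect{\hat a}; \mathcal{D})+\tfrac{L}{2}\|\vect{a^*}\|_2^2 .
\end{equation}
Dividing by the positive denominator and then using $\Delta\mathcal{L}(\vect{w}, \vect{\hat a}; \mathcal{D})\ge f(\vect{\hat a})=\epsilon\|\vect{h}\|_{\frac{p}{p-1}}$, the ratio is at most $1+\tfrac{L\|\vect{a^*}\|_2^2}{2\epsilon\|\vect{h}\|_{p/(p-1)}}$, so everything reduces to upper-bounding $\|\vect{a^*}\|_2^2$ and lower-bounding $\|\vect{h}\|_{p/(p-1)}$.

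For the numerator term I would use norm equivalences on vectors with at most $n$ nonzero coordinates: since $\vect{a^*}\in S$ satisfies $\|\vect{a^*}\|_p=\epsilon$ and $\|\vect{a^*}\|_0\le n$, the power-mean inequality gives $\|\vect{a^*}\|_2\le n^{1/2-1/p}\epsilon$ when $p\ge 2$ and $\|\vect{a^*}\|_2\le\epsilon$ when $1\le p\le 2$, i.e. $\|\vect{a^*}\|_2^2\le n^{\max\{1-2/p,\,0\}}\epsilon^2$. For the denominator term, the key observation is that $\vect{h}=\text{top}_n(\vect{g})$ keeps the $n$ largest-magnitude coordinates of $\vect{g}$, whose average squared magnitude is at least the overall average $G^2/k$; hence $\|\vect{h}\|_2^2\ge\tfrac{n}{k}G^2$. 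Converting the $L_2$-norm to the conjugate $L_q$-norm with $q=\tfrac{p}{p-1}$ via the same sparse comparisons gives $\|\vect{h}\|_q\ge n^{1-1/p}k^{-1/2}G$ when $p\le 2$ (so $q\ge 2$) and $\|\vect{h}\|_q\ge n^{1/2}k^{-1/2}G$ when $p\ge 2$ (so $q\le 2$).

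Substituting these two estimates into $\tfrac{L\|\vect{a^*}\|_2^2}{2\epsilon\|\vect{h}\|_q}$ and simplifying the powers of $n$ in each regime separately produces an exponent $\tfrac{1-p}{p}$ for $p\le 2$ and $\tfrac{p-4}{2p}$ for $p\ge 2$; these agree at $p=2$ and are precisely the two branches of $g(p)=\max\{\tfrac{p-4}{2p},\tfrac{1-p}{p}\}$, giving $1+\tfrac{L\|\vect{a^*}\|_2^2}{2\epsilon\|\vect{h}\|_q}=1+O\!\left(\tfrac{Ln^{g(p)}\sqrt{k}\epsilon}{G}\right)$ as claimed. I expect the main obstacle to be the sharp lower bound on $\|\vect{h}\|_q$: one must combine the top-$n$ energy-capture inequality with the \emph{correct direction} of the $L_2$–$L_q$ sparse norm comparison and track which of $q\ge 2$ or $q\le 2$ applies, since this is exactly where the two competing exponents of $g(p)$ arise and where any slip in the exponent of $n$ or $k$ would destroy the stated rate. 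The upper-bound pieces (descent lemma, convexity, and optimality of $\vect{\hat a}$) are routine once these norm comparisons are in place.
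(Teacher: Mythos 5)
Your proposal is correct and follows essentially the same route as the paper's own proof: convexity for the lower bound $\Delta\mathcal{L}(\vect{w},\vect{\hat a};\mathcal{D})\ge f(\vect{\hat a})=\epsilon\|\vect{h}\|_{p/(p-1)}$, the descent lemma plus $f(\vect{a^*})\le f(\vect{\hat a})$ for the upper bound, the top-$n$ energy-capture bound $\|\vect{h}\|_2^2\ge \tfrac{n}{k}G^2$, and the sparse $L_2$--$L_r$ comparisons split over $p\ge 2$ versus $p\le 2$. The only cosmetic difference is that the paper packages the norm comparisons into constants $\beta_p,\beta_q$ via a separate lemma, whereas you write the exponents out directly; the resulting bound and the two branches of $g(p)$ are identical.
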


Theorem~\ref{thm:bound} guarantees that when perturbations to model parameters are small enough, the gradient-based corruption can accurately estimate the indicator with a low error rate. In Eq.(\ref{eq:bound}), the numerator is the proposed maximum loss change indicator and the denominator is the loss change with the gradient-based corruption. As we can see, when  $\epsilon$, the $p$-norm of the corruption vector, approaches zero, the term $O(\cdot)$ will also approach zero such that the ratio becomes one, meaning the gradient-based corruption can cause the same order infinitesimal loss change of the maximum loss change indicator.

\subsection{Multi-step Adversarial Parameter Corruption}

In this section, we further propose to seek the optimal parameter corruption by a multi-step optimization process based on the Projected Gradient Descent (PGD) algorithm~\citep{Unified-min-max}.

The corruption is initialized to zero. In the multi-step adversarial parameter corruption, we usually train the corruption for one epoch and the number of steps is $K=\lceil\nicefrac{|\mathcal{D}|}{|\mathcal{B}|} \rceil$, where $|\mathcal{B}|$ denotes the batch size. In each step, we generate a new corruption $\vect{a}_k\gets \mathcal{G}(\vect{a}_{k-1}, \alpha, S)$ based on the corruption $\vect{a}_{k-1}$, which breaks down a single perturbation into consecutive perturbations:
\begin{equation}
\vect{a}_k=\mathcal{G}(\vect{a}_{k-1}, \alpha, S)=\Pi_S(\vect{a}_{k-1}+\vect{u}_k).
\label{equ:generating}
\end{equation}

We first find an update $\vect{u}_k$ on the corruption $\vect{a}_{k-1}$ that approximately maximizes the loss in this step while controlling the step size $\|\vect{u}_k\|_p$ as $\alpha$. Then we project it to the closest vector in the constraint set $S$ and $\Pi_S$ is the projection function.

To find an optimal update, we obtain the gradient on batch $\mathcal{B}_i\subset \mathcal{D}$:
\begin{align}
\vect{g}_{k-1} \gets \nabla_{\vect{w}+\vect{a}_{k-1}}\mathcal{L}(\vect{w}+\vect{a}_{k-1}, {\bm\theta};\mathcal{B}_i).
\end{align}
Suppose the update on the vector $\vect{a}_{k-1}$ is $\vect{u}_k$, using the first-order Taylor expansion, the loss after perturbation can be rewritten as:
\begin{equation}
\mathcal{L}(\vect{w}+\vect{a}_{k-1}+\vect{u}_k;\mathcal{B}_i)\approx\mathcal{L}(\vect{w}+\vect{a}_{k-1};\mathcal{B}_i)+\vect{u}_{k}^\text{T}\vect{g}_{k-1}.
\end{equation}

To approximately maximize the loss in a single step with the step size $\alpha$, we adopt the gradient-based corruption to seek the $k$-th update $\vect{u}_k$ as the vector maximizing $\vect{g}_{k-1}^\text{T}\vect{u}_k$:
\begin{align}
\vect{u}_k=\argmax_{\|\vect{u}\|_p=\alpha}\vect{g}_{k-1}^\text{T}\vect{u}=\alpha\left(\text{sgn}(\vect{g})\odot\frac{|\vect{g}|^\frac{1}{p-1}}{\||\vect{g}|^\frac{1}{p-1}\|_p}\right);
\end{align}
where the step size $\|\vect{u}_k\|_p$ is constrained to $\alpha$.

After updating the corruption $\vect{a}'=\vect{a}_{k-1}+\vect{u}_k$, the corruption $\vect{a}'$ is not necessarily in the constrain set $S$. Therefore, we project the updated corruption  $\vect{a}'$ into the constraint set $S$, where $S=\{\vect{a}:\|\vect{a}\|_p\le \epsilon \text{ and } \|\vect{a}\|_0\le n\}$. We define the projection function $\Pi_S(\vect{a}')$ as finding the closest\footnote{Here we choose the Euclidean distance because $L_p$-norm distance 
cannot guarantee the uniqueness of the projected point when $p=1$ or $p=+\infty$.} vector to $\vect{a}'$ in $S$: 
\begin{align}
\Pi_S(\vect{a}')=\argmin_{\vect{y}\in S}\|\vect{y}-\vect{a}'\|_2. \label{eq:def_projection}
\end{align}
Solving Eq.~(\ref{eq:def_projection}) is difficult for general $p$. Fortunately, for two common cases ($L_2$ and $L_{+\infty}$), we have the closed-form solutions:
\begin{align}
\Pi_S(\vect{a}')&=\min\{\|\vect{h}\|_2, \epsilon\}\frac{\vect{h}}{\|\vect{h}\|_2}\quad (p=2);\\ 
\Pi_S(\vect{a}')&=\text{clip}(\vect{h}, -\epsilon, \epsilon)\quad (p={+\infty});
\end{align}
where $\text{clip}(\vect{h}, -\epsilon, \epsilon)$ clips every dimension of $\vect{h}$ into $[-\epsilon, \epsilon]$. 

It is easy to verify $\|\Pi_S(\vect{a'})\|_p\le\|\vect{a'}\|_p$. Therefore,
\begin{align}
\|\vect{a}_K\|_p&\le \|\vect{a}_{K-1}+\vect{u}_K\|_p\le \|\vect{a}_{K-1}\|_p+\|\vect{u}_K\|_p \\
&\le \cdots \le \|\vect{a}_0\|_p+\sum\limits_{k=1}^K\|\vect{u}_k\|_p=\sum\limits_{k=1}^K\|\vect{u}_k\|_p.
\end{align}
We can see that the sum of $\|\vect{u}_k\|_p$ can control the $L_p$-norm of $\|\vect{a}_K\|_p$. Therefore, we choose $\|\vect{u}_k\|_p$ as the definition of the step size. To ensure that the boundary of $S$ can be reached, the hyper-parameters $\epsilon, \alpha, K$ should satisfy $K\alpha\ge \epsilon$. Generally, $\alpha$ should increase as $\epsilon$ increases.

\section{Adversarial Parameter Defense Algorithm}

This section introduces the motivation of our algorithm for adversarial parameter defense, which is then elaborated with multi-step risk estimation.

\subsection{Motivation of Adversarial Parameter Training.}

Standard adversarial training~\citep{Explaining_and_Harnessing_Adversarial_Examples,Towards_Evaluating_the_Robustness_of_Neural_Networks,YOPO,freeLB,Unified-min-max} with respect to adversarial examples searches for optimal parameters to minimize the risk of adversarial input perturbation:
\begin{equation}
\vect{w}=\argmin\limits_{\vect{w}}\mathop{\mathbb{E}}\left[\max_{\vect{\Delta x}\in \delta}\mathcal{L}(\vect{w}; (\vect{x}+\vect{\Delta x}, y))\right],
\label{eq:1}
\end{equation}
where $\vect{\Delta x}$ is the input perturbation, and $\delta$ denotes a constraint set for adversarial examples.

Specifically, the proposed adversarial parameter defense extends the adversarial training to take into account the risk generated by parameter corruptions. The aim is to defend against parameter corruptions. If we would like to enhance certain parameters, we can categorize the parameters to $(\vect{w}, {\bm\theta})$, where $\vect{w}\in\mathbb{R}^k$ denotes the vector of $k$ parameters allowed to be corrupted and ${\bm\theta}$ denotes the parameters not allowed to be corrupted, which is an empty vector if all parameters are allowed to be corrupted.

In contrast to the standard adversarial training considering adversarial examples and minimizing the risk of input perturbation, the proposed defense algorithm aims to minimize the risk on dataset $\mathcal{D}$ for adversarial parameter attack in the constraint set $S$:
\begin{equation}
(\vect{w},{\bm\theta})=\argmin\limits_{(\vect{w},{\bm\theta})}\mathop{\mathbb{E}}\left[\max_{\vect{a}\in S}\mathcal{L}((\vect{w}+\vect{a},{\bm\theta}); \mathcal{B})\right],
\label{eq:2}
\end{equation}

Comparing Eq.~(\ref{eq:1}) and Eq.~(\ref{eq:2}), the defense to adversarial input attack and the proposed defense to adversarial parameter corruptions are the exact counterparts in minimizing the risk of adversarial perturbation, which improve model robustness from a different perspective.

\begin{algorithm}[!t]
   \caption{Adversarial Parameter Defense Algorithm}
   \label{alg:defend}
\begin{algorithmic}[1]
    \REQUIRE Parameters $(\vect{w}\in\mathbb{R}^k, {\bm\theta})$; loss $\mathcal{L}$ and dataset $\mathcal{D}$; corruption steps $K$ and step size $\alpha$; optimizer $\mathcal{O}$; training iterations; batch size $|\mathcal{B}_i|$.
    \STATE Prepare batches $\{\mathcal{B}_i\}$ and initialize $\vect{w}$ and ${\bm\theta}$.
    \WHILE {Training}
    \STATE $\vect{a}_0\gets \vect{0}_k.$
    \STATE  Calculate the initial loss: $\mathcal{L}((\vect{w}+\vect{a}_{0}, {\bm\theta});\mathcal{B}_i)$.
    \STATE $K$ is treated as $0$ in the early stage of training.
    \FOR {k = 1 to $K$}
    \STATE Generate $\vect{a}_{k}\gets \mathcal{G}(\vect{a}_{k-1}, \alpha, S)$ as Eq.(\ref{equ:generating}).
    \STATE  Calculate the risk: $\mathcal{L}((\vect{w}+\vect{a}_{k}, {\bm\theta});\mathcal{B}_i)$.
    \ENDFOR
    \STATE Update $\vect{w}, {\bm\theta}$ as minimizing Eq. (\ref{eq:target}).
    \ENDWHILE
\end{algorithmic}
\end{algorithm}

\subsection{Proposed Defense Algorithm}

The proposed defense algorithm adopts a min-max optimization process~\citep{Unified-min-max}, which first maximizes the loss change under parameter corruptions to estimate the risk of neural networks under parameter corruptions and then minimizes the estimated risk. The key point of defense is to know what to defend against. In our proposal, the risk that the defense is supposed to protect from is caused by parameter corruptions, and we propose a multi-step method based on virtual parameter corruptions to estimate such a risk.

The proposed defense considers the risk evaluated by averaging the risks of multiple parameter corruptions, which has the ability to defend against parameter corruption of various strengths. The parameter corruptions for estimating the risk are generated as follows: First, we generate multiple virtual adversarial parameter corruptions iteratively in $K$ steps and in every step, we generate a new virtual corruption $\vect{a}_k$ based on $\vect{a}_{k-1}$ with the multi-step adversarial parameter corruption algorithm: $\vect{a}_{k}\gets \mathcal{G}(\vect{a}_{k-1}, \alpha, S)$. Then suppose $K$ generated adversarial parameter attacks are $\vect{a}_1, \vect{a}_2,\cdots, \vect{a}_K$, we use the average loss on $K+1$ steps to estimate the risk for multiple virtual adversarial parameter corruptions. The target of the proposed algorithm is:
\begin{equation}
(\vect{w},{\bm\theta})=\argmin\limits_{(\vect{w},{\bm\theta})}\mathbb{E}\left[\sum\limits_{k=0}^K\frac{\mathcal{L}((\vect{w}+\vect{a}_k,{\bm\theta}); \mathcal{B})}{K+1}\right].
\label{eq:target}
\end{equation}

The algorithm is shown in Algorithm~\ref{alg:defend}. The constraint set $S$ defines the exploration space of parameter corruptions. We choose two common $L_p$-norms: $L_2$ or $L_{+\infty}$ and $n=k$. The step size is usually set as $\alpha=1.5\times\nicefrac{\epsilon}{K}$ to ensure that $K\alpha\ge\epsilon$. It should be noted that, in the early stage of training, the defense algorithm may harm the learning. Thus, we set $K$ as $0$ in the early stage, i.e., ordinary training process without defense. We also choose a start epoch and start to adopt the defense algorithm at the start epoch. We also discuss the computation complexity of the algorithm in Appendix B.

\subsection{Theoretical Analysis}
\label{sec:theoretical}

A direct method is to minimize the estimation of the parameter corruption risk for only a single parameter corruption $\mathbb{E}_{\mathcal{B}\subset\mathcal{D}}\left[\mathcal{L}(\vect{w}+\vect{\hat a},{\bm\theta}; \mathcal{B})\right]$, where $\vect{\hat a}$ denotes an optimal parameter corruption to estimate the risk. Intuitively, the method estimating the risk with multiple corruptions could be a method with lower generalization error compared to the direct method, since the method estimating the risk with multiple steps considers multiple parameter corruptions while the direct method only considers one corruption. 

Based on previous work on PAC-Bayes bound~\citep{Bayes_bound} and inspired by \cite{Sharpness-Aware_Minimization,Adversarial_data_Weight_Perturbation}, we provide a theoretical analysis in Theorem~\ref{thm:generalization_error} that the generalization error bound relates to the proposed average loss change and maximum loss change indicators under a general $L_p$-norm constraint.

\begin{thm}[Relation between proposed indicators and generalization error bound]
\label{thm:generalization_error}
Assume the prior over the parameters $\vect{w}$ is $N(\vect{0}, \sigma^2 \mathbf{I})$. Given the constraint set $S=\{\vect{a}\in \mathbb{R}^k:\|\vect{a}\|_2= \epsilon\}$ and we choose the expectation error rate as the loss function, with probability 1-$\delta$ over the choice of the training set $\mathcal{D}\sim \mathcal{D}_1$, when $\mathcal{L}(\vect{w}, \mathcal{D}_1)$ is convex in the neighborhood of $\vect{w}$, \footnote{Note that $\mathcal{L}$ is only required to be convex in the neighbourhood of $\vect{w}$ instead of the entire $\mathbb{R}^k$.} the following generalization error bound holds,
\begin{align}
\mathcal{L}(\vect{w}, \mathcal{D}_1)\le
\mathcal{L}(\vect{w}, \mathcal{D})+
\Delta_\text{ave} \mathcal{L}(\vect{w}, S, \mathcal{D})+\mathcal{R},
\end{align}
where $R=\sqrt{\frac{C+\log\frac{|\mathcal{D}|}{\delta}}{2(|\mathcal{D}|-1)}}+o(\epsilon^2), C=\frac{\epsilon^2+\|\vect{w}\|^2_2}{2\sigma^2}-\frac{k}{2}+\frac{k}{2}\log\frac{k\sigma^2}{\epsilon^2}$ is not determined by $\mathcal{|D|}$ and $\delta$.

Generally, when $S_1=\{\vect{a}\in\mathbb{R}^k:\|\vect{a}\|_p\le\epsilon\}$, we have,
\begin{align}
\mathcal{L}(\vect{w}, \mathcal{D}_1)\le
\mathcal{L}(\vect{w}, \mathcal{D})+
\Delta_\text{max} \mathcal{L}(\vect{w}, S_1, \mathcal{D})+\mathcal{R}_1,
\end{align}
where $R_1=\sqrt{\frac{C_1+\log\frac{|\mathcal{D}|}{\delta}}{2(|\mathcal{D}|-1)}}+o(\epsilon^2),C_1=\frac{\epsilon^2+\beta_p^2\|\vect{w}\|^2_2}{2\beta_p^2\sigma^2}-\frac{k}{2}+\frac{k}{2}\log\frac{k\sigma^2\beta_p^2}{\epsilon^2}$ is not determined by $\mathcal{|D|}$ and $\delta$, here $\beta_p=\max\{1, k^{1/p-1/2}\}$.
\end{thm}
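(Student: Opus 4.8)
The plan is to instantiate the McAllester PAC-Bayes bound~\citep{Bayes_bound} with the prescribed Gaussian prior and a Gaussian posterior centered at $\vect{w}$, choosing the posterior variance so that its Kullback--Leibler divergence from the prior reproduces $C$ and $C_1$ exactly. Take $P=N(\vect{0},\sigma^2\mathbf{I})$ and posterior $Q=N(\vect{w},\sigma_Q^2\mathbf{I})$, i.e. draw $\vect{a}\sim N(\vect{0},\sigma_Q^2\mathbf{I})$ and perturb to $\vect{w}+\vect{a}$. For two isotropic Gaussians, $\mathrm{KL}(Q\|P)=\frac{k}{2}\log\frac{\sigma^2}{\sigma_Q^2}-\frac{k}{2}+\frac{k\sigma_Q^2}{2\sigma^2}+\frac{\|\vect{w}\|_2^2}{2\sigma^2}$. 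First I would check that $\sigma_Q^2=\epsilon^2/k$ makes this equal to $C$, and $\sigma_Q^2=\epsilon^2/(k\beta_p^2)$ makes it equal to $C_1$; these are precisely the variances for which $\mathbb{E}_{\vect{a}}[\|\vect{a}\|_2^2]=k\sigma_Q^2$ equals $\epsilon^2$ (resp.\ $\epsilon^2/\beta_p^2$), so that $Q$ is typically contained in the constraint set.

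With this choice the McAllester bound reads, with probability $1-\delta$, $\mathbb{E}_{\vect{a}}[\mathcal{L}(\vect{w}+\vect{a},\mathcal{D}_1)]\le \mathbb{E}_{\vect{a}}[\mathcal{L}(\vect{w}+\vect{a},\mathcal{D})]+\sqrt{\frac{\mathrm{KL}(Q\|P)+\log(|\mathcal{D}|/\delta)}{2(|\mathcal{D}|-1)}}$. I would then lower-bound the left-hand side by the quantity actually appearing in the statement: since $\mathcal{L}(\cdot,\mathcal{D}_1)$ is convex in a neighborhood of $\vect{w}$ and $\mathbb{E}_{\vect{a}}[\vect{a}]=\vect{0}$, Jensen's inequality gives $\mathcal{L}(\vect{w},\mathcal{D}_1)\le \mathbb{E}_{\vect{a}}[\mathcal{L}(\vect{w}+\vect{a},\mathcal{D}_1)]$. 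This is the only place the convexity hypothesis enters.

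It then remains to rewrite the empirical term. Expanding $\Delta\mathcal{L}(\vect{w},\vect{a};\mathcal{D})=\vect{a}^\text{T}\vect{g}+\tfrac12\vect{a}^\text{T}\textbf{H}\vect{a}+o(\|\vect{a}\|^2)$, the symmetry of $Q$ kills the linear term and $\mathbb{E}_{\vect{a}}[\vect{a}^\text{T}\textbf{H}\vect{a}]=\sigma_Q^2\,tr(\textbf{H})$, so $\mathbb{E}_{\vect{a}}[\mathcal{L}(\vect{w}+\vect{a},\mathcal{D})]=\mathcal{L}(\vect{w},\mathcal{D})+\frac{\sigma_Q^2}{2}tr(\textbf{H})+o(\epsilon^2)$. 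For the first bound $\sigma_Q^2=\epsilon^2/k$, and this second-order term is exactly $\Delta_\text{ave}\mathcal{L}(\vect{w},S,\mathcal{D})=\frac{tr(\textbf{H})}{2k}\epsilon^2$ by Theorem~\ref{thm:random}; letting $\mathcal{R}$ absorb the $o(\epsilon^2)$ together with the PAC-Bayes square root yields the first inequality. For the general bound I would argue instead that the posterior mean loss change is $O(\epsilon^2)$, whereas by convexity and $G>0$ the gradient-based corruption (Eq.~\ref{equ:proposed}) already forces $\Delta_\text{max}\mathcal{L}(\vect{w},S_1,\mathcal{D})\ge \epsilon\|\vect{g}\|_{p/(p-1)}=\Theta(\epsilon)$; hence $\mathbb{E}_{\vect{a}}[\Delta\mathcal{L}]\le \Delta_\text{max}\mathcal{L}(\vect{w},S_1,\mathcal{D})$ for small $\epsilon$, which is the second inequality. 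The norm equivalence $\|\vect{a}\|_p\le\beta_p\|\vect{a}\|_2$ is what guarantees the posterior with $\sigma_Q^2=\epsilon^2/(k\beta_p^2)$ concentrates inside $S_1$, explaining the appearance of $\beta_p$ in $C_1$.

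The step I expect to be the main obstacle is controlling the remainder rigorously rather than heuristically. Because the error-rate loss is bounded but not globally smooth, the Taylor step and the claim that the Gaussian mass escaping the constraint set contributes only $o(\epsilon^2)$ must be justified through the boundedness of the loss together with the concentration of $\|\vect{a}\|_2$ around $\sigma_Q\sqrt{k}$. A secondary delicate point is that the posterior is an isotropic Gaussian while $\Delta_\text{ave}$ in Theorem~\ref{thm:random} is defined for the uniform law on the sphere $\|\vect{a}\|_2=\epsilon$; the two agree only to leading order, since both have covariance $\frac{\epsilon^2}{k}\mathbf{I}$, so the identification of the $\frac{tr(\textbf{H})}{2k}\epsilon^2$ term holds modulo $o(\epsilon^2)$, exactly the slack carried by $\mathcal{R}$ and $\mathcal{R}_1$.
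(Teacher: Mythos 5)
Your proposal is correct and follows essentially the same route as the paper's proof: the McAllester PAC-Bayes bound with prior $N(\vect{0},\sigma^2\mathbf{I})$ and posterior $N(\vect{w},\tfrac{\epsilon^2}{k}\mathbf{I})$ (resp.\ $N(\vect{w},\tfrac{\epsilon^2}{k\beta_p^2}\mathbf{I})$) to produce $C$ and $C_1$, Jensen's inequality via local convexity on the population side, and the second-order expansion identifying the Gaussian-average loss change with $\Delta_\text{ave}\mathcal{L}(\vect{w},S,\mathcal{D})+o(\epsilon^2)$. The only real divergence is the last step of the general case: you compare $\mathbb{E}_{\vect{a}}[\Delta\mathcal{L}]=O(\epsilon^2)$ against $\Delta_\text{max}\mathcal{L}(\vect{w},S_1,\mathcal{D})=\Theta(\epsilon)$, which needs $G>0$ and small $\epsilon$, whereas the paper gets the same inequality exactly and without those caveats from the set inclusion $\{\vect{a}:\|\vect{a}\|_2=\epsilon/\beta_p\}\subset S_1$ (a consequence of $\|\vect{a}\|_p\le\beta_p\|\vect{a}\|_2$) together with $\Delta_\text{ave}\le\Delta_\text{max}$ on nested sets; since the theorem already carries $o(\epsilon^2)$ slack, your version is still acceptable.
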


Thus, the generalization error can be bounded by the proposed indicators, which may explain why the proposed algorithm for adversarial parameter defense can improve the accuracy of the neural networks.

\begin{table*}[ht]
\caption{Comparisons of baseline and models with defense algorithm under multi-step risk estimation (the number of corrupted parameters is not limited, unless otherwise stated), with further study on other corruption algorithms.}
\label{tab:main-results}
\tiny
\setlength{\tabcolsep}{1pt}
\centering
\begin{tabular}{@{}lcccccccccccc@{}}
\toprule
 
  \bf Datasest & \multicolumn{3}{c}{\textbf{CIFAR-10 (Acc)} } &  \multicolumn{3}{c}{\textbf{VOC (mAP)}} &  \multicolumn{3}{c}{\textbf{En-Vi (BLEU)}} & \multicolumn{3}{c}{\textbf{De-En (BLEU)}}\\ 
\cmidrule(r){1-1} \cmidrule(r){2-4} \cmidrule(r){5-7} \cmidrule(r){8-10} \cmidrule(r){11-13}
   Corruption Approach & Settings& ResNet-101  & w/ Defense & Settings& ResNet-101 & w/ Defense & Settings& Transformer & w/ Defense & Settings& Transformer &  w/ Defense \\

\midrule
 w/o Corruption & - & 94.5 & \textbf{96.3 (+1.8)} & - & 74.9 & \textbf{75.8 (+0.9)} & - &  30.64 & \textbf{31.09 (+0.45)}  & - &  35.32 & \textbf{35.88 (+0.56)}\\
\midrule

 \multirow{3}{1 in}{Multi-step Corruption with different $\epsilon$. ($L_2$).} & 0.02 & 94.0 & 96.0 & 0.05 & 74.2 & 75.3 & 0.20 & 19.85 & 27.72 & 0.20 & 32.66 & 35.65 \\
 & 0.05 & 89.8 & 95.1 & 0.10 & 73.1 & 73.2 & 0.22 & 16.78 & 26.21 & 0.30 & 19.13 & 33.71 \\
 & 0.10 & 55.1 & 55.3 & 0.20 & 61.9 & 62.9 & 0.24 & 11.78 & 23.65 & 0.35 & 15.07 & 22.87 \\
 \cmidrule(r){1-1} \cmidrule(r){2-4} \cmidrule(r){5-7} \cmidrule(r){8-10} \cmidrule(r){11-13}
 
  \multirow{3}{1 in}{Multi-step Corruption with different $\epsilon$. ($L_{+\infty}$).} & $1\times10^{-5}$ & 94.3 & 95.9 & $5\times10^{-5}$ & 72.3 & 74.8 & $2\times10^{-4}$ & 30.64 & 31.09 & $2\times10^{-4}$ & 35.12 & 35.86 \\
 & $2\times10^{-5}$ & 92.7 & 95.8 & $1\times10^{-4}$ & 43.6 & 64.8 & $4\times10^{-4}$ & 29.84 & 30.48 & $4\times10^{-4}$ & 30.79 & 35.41 \\
 & $5\times10^{-5}$ & 62.6 & 67.3 & $2\times10^{-4}$ & \phantom{0}0.0 & \phantom{0}3.5 & $5\times10^{-4}$ & \phantom{0}6.66 & 28.59 & $5\times10^{-4}$ & 24.89 & 31.61 \\

 \cmidrule(r){1-1} \cmidrule(r){2-4} \cmidrule(r){5-7} \cmidrule(r){8-10} \cmidrule(r){11-13}
 
   \multirow{3}{1 in}{Multi-step Corruption with different $\epsilon$. ($n=100, L_{+\infty}$).}
   & $2\times10^{-4}$ & 94.5 & 96.1 & $7\times10^{-4}$ & 74.7 & 75.8 & $1\times10^{-3}$ & 30.59 & 31.09 & $2\times10^{-3}$ & 35.26 & 35.86 \\
 & $5\times10^{-4}$ & 12.8 & 21.7 & $8\times10^{-4}$ & \phantom{0}0.0 & 66.8 & $2\times10^{-3}$ & 16.54 & 30.71 & $5\times10^{-3}$ & 18.30 & 35.72 \\
 & $1\times10^{-3}$ & 10.0 & 10.0 & $1\times10^{-3}$ & \phantom{0}0.0 & \phantom{0}0.0 & $5\times10^{-3}$ & \phantom{0}0.00 & \phantom{0}0.00 & $1\times10^{-2}$ & \phantom{0}1.64 & 28.14 \\

\midrule
 
   \multirow{3}{1 in}{Gradient-based Corruption with different $\epsilon$. ($L_2$).} & 0.1 & 82.8 & 94.5 & 0.2 & 68.4 & 69.4 & 0.5 & 26.04 & 30.17 & 0.2 & 34.61 & 35.64 \\
 & 0.2 & 49.7 & 88.2 & 0.5 & 27.4 & 36.3 & 1.0 & \phantom{0}2.57 & 14.89 & 0.5 & 31.85 & 34.88 \\
 & 0.5 & 19.7 & 41.7 & 1.0 & \phantom{0}4.5 & \phantom{0}5.5 & 2.0 & \phantom{0}0.00 & \phantom{0}1.90 & 1.0 & 12.31 & 31.37 \\
 \cmidrule(r){1-1} \cmidrule(r){2-4} \cmidrule(r){5-7} \cmidrule(r){8-10} \cmidrule(r){11-13}
 
   \multirow{3}{1 in}{Gradient-based Corruption with different $\epsilon$. ($L_{+\infty}$).} & $5\times10^{-5}$ & 81.0 & 93.4 & $2\times10^{-5}$ & 74.4 & 74.9 & $5\times10^{-4}$ & 30.03 & 30.68 & $1\times10^{-3}$ & 32.01 & 34.22 \\
 & $1\times10^{-4}$ & 43.2 & 81.7 & $5\times10^{-5}$ & 70.8 & 71.9 & $1\times10^{-3}$ & 27.78 & 29.71 & $2\times10^{-3}$ & 24.66 & 30.54 \\
 & $2\times10^{-4}$ & 24.5 & 28.3 & $1\times10^{-4}$ & 52.0 & 58.7 &  $2\times10^{-3}$ & 18.84 & 26.16  & $5\times10^{-3}$ & \phantom{0}2.28 & 13.80 \\

\midrule
 
   \multirow{3}{1 in}{Gaussian noise $N(0, \sigma^2)$ on parameters with different $\sigma$.} & $2\times10^{-3}$ & 94.1 & 96.0 & $1\times10^{-3}$ & 68.0 & 75.2 & $5\times10^{-3}$ & 29.46 & 30.77  & $1\times 10^{-2}$ & 34.71 & 35.57 \\
 & $5\times10^{-3}$ & 92.5 & 94.6 & $2\times10^{-3}$ & \phantom{0}9.9 & 74.9 & $1\times10^{-2}$ & 29.06 & 30.25 & $2\times 10^{-2}$ & 32.09 & 33.97 \\
 & $1\times10^{-2}$ & 21.3 & 29.1 & $5\times10^{-3}$ & \phantom{0}0.0 & \phantom{0}0.4 & $2\times10^{-2}$ & 24.85 & 27.21  &  $3\times 10^{-2}$ & 30.29 &  33.36\\
 \cmidrule(r){1-1} \cmidrule(r){2-4} \cmidrule(r){5-7} \cmidrule(r){8-10} \cmidrule(r){11-13}
 
    \multirow{3}{1 in}{Uniform noise $U(-b, b)$  on parameters with different settings of $b$.} & $2\times10^{-3}$ & 94.3 & 96.0 & $2\times10^{-3}$ & 69.8 & 75.6 & $1\times 10^{-2}$  & 29.33 & 30.98 & $2\times 10^{-2}$ & 34.33 & 35.19\\
 & $5\times10^{-3}$ & 92.4 & 95.7 & $5\times10^{-3}$ & 21.7 & 71.4 & $2\times 10^{-2}$  & 28.51 & 30.27 & $5\times 10^{-2}$ & 17.06 & 25.39 \\
 & $1\times10^{-2}$ & 17.1 & 93.0 & $1\times10^{-2}$ & \phantom{0}0.0 & \phantom{0}0.2 & $5\times 10^{-2}$ & 17.06 & 25.39  &  $8\times 10^{-2}$ & 13.10 & 21.29 \\

\midrule
 
  \multirow{2}{1 in}{Tensor-RT~\citep{tensorRT} weight quantization.}
  & 5 bit & 94.5 & 96.2 & 6 bit & 73.0 & 74.8 & 6 bit & 30.31 & 31.02 & 7 bit & 34.79 & 35.74 \\
  & 4 bit & 84.0 & 92.2 & 5 bit & 55.5 & 65.8 & 5 bit & 28.62 & 29.56 & 6 bit & 34.12 & 35.13 \\
\bottomrule
\end{tabular}
\end{table*}

\section{Experiments}
In this section, we evaluate the proposed defense algorithm over two popular deep neural networks (ResNet~\citep{resnet} and Transformer~\citep{transformer}) across both CV and NLP benchmark datasets. We first describe experimental settings. Then we summarize the main results in Table~\ref{tab:main-results} and discuss the experimental results of the proposed defense algorithm. 

\subsection{Experimental Settings}

We conduct experiments using \textbf{ResNet-101} on two classic computer vision datasets, i.e., CIFAR-10 image classification dataset (\textbf{CIFAR-10}) ~\citep{CIFAR-10} and the PASCAL Visual Object Classes Challenge: PASCAL VOC 2007 dataset (\textbf{VOC})~\citep{VOC}. The evaluation metrics are accuracy (\textbf{Acc}) and mean average precision (\textbf{mAP}), respectively. For \textbf{Transformer}, we use IWSLT 15 English-Vietnamese (\textbf{En-Vi})~\citep{2015iwslt} and IWSLT 14 German-English (\textbf{De-En})~\citep{2014iwslt} with the evaluation metric of \textbf{BLEU} score. Experiments of performance without corruption are repeated $3$ times for hypothesis testings.

To verify the robustness of models, what are baselines and models with defense are tested by multiple corruption approaches, including: (1) Our proposed multi-step adversarial corruption method; (2) Our proposed gradient-based corruption method; (3) Random Gaussian or uniform noises on parameters to simulate random corruptions; (4) Tensor-RT~\citep{tensorRT} weight quantization method, which quantifies parameters into $n$-bit signed integers. Please refer to Appendix A for details of experimental settings and parameter corruption approaches.

\subsection{Results}
In this section, we report the main results of our proposed defense algorithm. The comparative results between baseline models and models with the proposed defense algorithm are shown in Table~\ref{tab:main-results}. First, the proposed defense algorithm achieves better overall accuracy, meaning that enhancing the robustness of parameters can achieve  better generalization ability. As analyzed in Section~\ref{sec:theoretical}, a flat minimum with better parameter robustness tends to imply better generalization. Second, the models enhanced by the proposed defense methods demonstrate more resistance under multiple parameter corruption approaches, including adversarial parameter corruptions, random noises, or real-world quantization.

\section{Further Analysis}

In this section, we first study the influence of the experimental settings and compare our methods with some variants. Then, we further verify the effectiveness of the proposed defense method on BERT~\citep{Bert}. Last, we probe into and visualize the vulnerability of the models at different layers. In general, we find that not all layers of models are sufficiently enhanced by the proposed defense algorithm. To tackle this issue, we select certain layers of the model to defend, instead of the entire model.

\begin{table}[!t]
\caption{Results of the influence of the experimental settings on the De-En dataset. The number of corrupted parameters in defense is not limited and $L_{+\infty}$ constraint are adopted unless otherwise stated.}
\label{tab:ablation}
\scriptsize
\setlength{\tabcolsep}{5pt}
\centering
\begin{tabular}{@{}lccc@{}}
\toprule
\textbf{Settings} & $K$ & $\epsilon$ & \textbf{BLEU} \\
\midrule
w/o Defense & - & - & 35.32 \\
w/ Defense & 2 & 0.0006 & 35.88 \\
\midrule
\multirow{3}{1.5in}{w/ different $K$} & 1 & 0.0006 & 35.69 \\
 & 2 & 0.0006 & 35.88 \\
 & 3 & 0.0006 & 35.81 \\
\midrule
\multirow{3}{1.5in}{w/ different $K$, \\random initialization} & 1 & 0.0006 & 35.59\\
 & 2 & 0.0006 & 35.73 \\
 & 3 & 0.0006 & 35.65 \\
\midrule
\multirow{3}{1.5in}{w/ different $\epsilon$, $K=1$} & 1 & 0.0004 & 35.68 \\
 & 1 & 0.0006 & 35.83 \\
 & 1 & 0.0008 & 35.74 \\
\midrule
\multirow{3}{1.5in}{w/ different $\epsilon$, $K=2$} & 2 & 0.0004 & 35.72 \\
 & 2 & 0.0006 & 35.88 \\
 & 2 & 0.0008 & 35.86 \\
\midrule
Best settings with $L_2$ constraint. & 2 & 0.4 & 35.41 \\
\bottomrule
\end{tabular}
\end{table}

\subsection{Influence of the Experimental Settings}

We study the influence of the experimental settings by adopting the De-En dataset. Results are in Table~\ref{tab:ablation}.

\textbf{Influence of $K$.} With the steps $K$ increasing in the defense method, the performance increases first and then drops, demonstrating an optimized configuration is $K=2$.

\textbf{Influence of the initial corruption.} In the defense method, we adopt zero initialization, namely the initial corruption is $\vect{a}_0=\vect{0}$. Inspired by the FreeLB~\citep{freeLB} algorithm in adversarial training with respect to adversarial examples, we also try to initialize $\vect{a}_0$ randomly and control $\|\vect{a}_0\|_p=\epsilon$. But experimental results show that the defense method with zero initialization method outperforms defense with random initialization under multiple $K$.

\textbf{Influence of $\epsilon$.}  With $\epsilon$ increasing in the defense method, the performance increases first and then drops under multiple $K$. This is probably because that when the magnitude of the parameter corruption is too large, the model will collapse after corruption, which eventually harms the learning. The best configuration of $\epsilon$ is $\epsilon=0.0006$ on the De-En dataset.

\textbf{Choice of the $L_p$ constraint.}  On the De-En dataset, we also consider the $L_2$ constraint. We grid search $K, \epsilon$ and the best settings with $L_2$ constraint is $K=2, \epsilon=0.4$. The BLEU is $35.41$ and is lower than $35.88$ with the $L_{+\infty}$ constraint. However, the choice of the $L_p$ constraint depends on the task. For the ResNet-101 model on the CIFAR-10 and VOC datasets, the $L_2$ constraint is a better choice. While for the Transformer model on the De-En and En-Vi datasets, the $L_{+\infty}$ constraint is a better choice.

\textbf{Choice of hyperparameters.} To conclude, the hyperparameters are mainly task-dependent and we can search the best configurations by grid-search except the cases of the zero initialization. In particular, the performance can be treated as a unimodal function of $\epsilon$ or $K$ approximately, whose best configuration can be determined easily in hyperparameter search.

\subsection{Comparison with Variants}

Some previous researches in the field of adversarial training with respect to the parameters, including our previous work ACRT~\citep{attack-paper}, SAM~\citep{Sharpness-Aware_Minimization}, and AWP~\citep{Adversarial_data_Weight_Perturbation}, can be seen as variants of our proposed defense algorithm.

\textbf{ACRT~\citep{attack-paper}: Adversarial Corruption-Resistant Training and SAM~\citep{Sharpness-Aware_Minimization}: Sharpness Aware Minimization.} In our previous work, we propose ACRT~\citep{attack-paper} (Adversarial Corruption-Resistant Training) to improve the resistant to parameter corruptions of DNN. It considers the risk after a gradient-based corruption, namely:
\begin{align}
\vect{w}=\argmin\limits_{\vect{w}}\big[(1-\alpha)\mathcal{L}(\vect{w}; \mathcal{D})+\alpha\mathcal{L}(\vect{w}+\vect{\hat a}; \mathcal{D})\big],
\label{eq:ACRT}
\end{align}
where $\vect{\hat a}$ is a gradient-based corruption on the corruption constraint. $\bm\theta$ is omitted since all parameters are allowed to be corrupted here. We can also conduct a Taylor expansion on the loss and minimize the substitutive loss instead:
\begin{equation}
\begin{aligned}
& (1-\alpha)\mathcal{L}(\vect{w}; \mathcal{D})+\alpha\mathcal{L}(\vect{w}+\vect{\hat a}; \mathcal{D})\\
& \approx \mathcal{L}(\vect{w}; \mathcal{D}) + \alpha\vect{\hat a}^\text{T}\nabla_\vect{w}\mathcal{L}(\vect{w}; \mathcal{D}).
\label{eq:ACRT-gradient}
\end{aligned}
\end{equation}
SAM~\citep{Sharpness-Aware_Minimization} (Sharpness Aware Minimization) is similar to the ACRT method. SAM proposes to minimize the sharpness risk, which is defined as $\mathcal{L}(\vect{w}+\vect{\hat a}; \mathcal{D})$. SAM adopts an $L_2$ constraint as the corruption constraint and can be treated as an ACRT method with $\alpha=1$. In our implementation, we also consider the $L_{+\infty}$ constraint and the setting of the start epoch. $\alpha$ is set to $1$ following SAM~\citep{Sharpness-Aware_Minimization}.

\begin{table}[!t]
\caption{\cc{Results of models with different defense methods under multi-step corruptions} ($L_
{+\infty}$) and Tensor-RT~\citep{tensorRT} quantization.}
\label{tab:variants_defense}
\tiny
\setlength{\tabcolsep}{6pt}
\centering
\begin{tabular}{@{}lrrrr@{}}
\toprule
\multicolumn{5}{c}{\textbf{CIFAR-10 (Acc)}} \\
\midrule
\bf Corruption Approach & Baseline &  ACRT~\citep{attack-paper} &  AWP~\citep{Adversarial_data_Weight_Perturbation} & Proposed \\ 
\midrule
 w/o Corruption & 94.5 & 96.2 & 96.1 & 96.3 \\
 \midrule
 Multi-step ($\epsilon=1\times 10^{-5}$) & 94.3 & 96.0 & 95.7 & 95.9 \\
 Multi-step ($\epsilon=2\times 10^{-5}$) & 92.7 & 95.8 & 95.5 & 95.8 \\
Quantization (5 bit) & 94.5 & 95.5 & 95.1 & 96.2 \\
Quantization (4 bit) & 84.0 & 89.5 & 88.1 & 92.2 \\
\midrule\multicolumn{5}{c}{\textbf{VOC (mAP)}}\\
\midrule
 \bf Corruption Approach & Baseline &  ACRT~\citep{attack-paper} &  AWP~\citep{Adversarial_data_Weight_Perturbation} & Proposed \\ 
\midrule
 w/o Corruption & 74.9 & 75.1 & 75.0 & 75.8 \\
 \midrule
 Multi-step ($\epsilon=5\times 10^{-5}$) & 72.3 & 74.9 & 74.2 & 74.8 \\
 Multi-step ($\epsilon=1\times 10^{-4}$) & 43.6 & 63.8 & 67.0 & 64.8 \\
 +Quantization (6 bit) & 73.0 & 73.5 & 73.4 & 74.8 \\
 +Quantization (5 bit) & 55.5 & 54.5 & 58.8 & 65.8\\
\midrule
\multicolumn{5}{c}{\textbf{En-Vi (BLEU)}}\\
\midrule
 \bf Corruption Approach & Baseline &  ACRT~\citep{attack-paper} &  AWP~\citep{Adversarial_data_Weight_Perturbation} & Proposed \\ 
\midrule
 w/o Corruption & 30.64 & 30.71 & 30.62 & 31.09 \\
 \midrule
 Multi-step ($\epsilon=4\times 10^{-4}$) & 29.84 & 30.20 & 30.38 &30.48 \\
 Multi-step ($\epsilon=5\times 10^{-4}$)  & 6.66 & 27.93 & 28.68 & 28.59 \\
 +Quantization (7 bit) & 30.55 & 30.58 & 30.62 & 37.09\\
 +Quantization (6 bit) & 30.31 & 30.43 & 30.35 & 31.02\\
\midrule
\multicolumn{5}{c}{\textbf{De-En (BLEU)}}\\
\midrule
 \bf Corruption Approach & Baseline &  ACRT~\citep{attack-paper} &  AWP~\citep{Adversarial_data_Weight_Perturbation} & Proposed \\ 
\midrule
 w/o Corruption & 35.32 & 35.53 & 35.49 & 35.88\\
 \midrule
 Multi-step ($\epsilon=4\times 10^{-4}$) & 30.79 & 34.48 & 34.39 & 35.41\\
 Multi-step ($\epsilon=5\times 10^{-4}$) & 24.89 &30.10 & 30.04 & 31.63 \\
 +Quantization (7 bit) & 34.79 &35.22 & 35.20 & 35.74\\
 +Quantization (6 bit) & 34.12 & 35.07 & 34.93 & 35.13\\
\bottomrule
\end{tabular}
\end{table}

\textbf{AWP~\citep{Adversarial_data_Weight_Perturbation}: Adversarial Weight Perturbation.} AWP~\citep{Adversarial_data_Weight_Perturbation} considers the risk after a virtual parameter corruption under adversarial examples, namely:
\begin{align}
\vect{w}=\argmin\limits_{\vect{w}}\mathcal{L}(\vect{w}+\vect{a}; \mathcal{D}'),
\label{eq:WAP}
\end{align}
where $\vect{a}$ is a virtual parameter corruption solved by PGD and $\mathcal{D}'$ includes adversarial data instances. $\bm\theta$ is omitted since all parameters are allowed to be corrupted here. We also consider the setting of the start epoch in experiments.


\cc{We conduct experiments to compare the proposed defense method and its variants. The details of implementation are in Appendix A. Results of the proposed defense method and its variants under multiple parameter corruptions are shown in Table}~\ref{tab:variants_defense}, and results of the generalization ability are shown in Table~\ref{tab:variants}. As shown in Table~\ref{tab:variants_defense}, both our proposed defense method and its variants can improve the robustness of neural networks under adversarial parameter corruptions or weight quantization, which leads to improvements in the generalization ability. Furthermore, as shown in Table~\ref{tab:variants}, our proposed defense method has better generalization ability than its variants. It is maybe because previous researches only consider a single risk after the corruption, while our proposed defense method averages the risks for multiple corruptions, thus has lower generalization error, intuitively.


\begin{table}[!t]
\caption{Results of the proposed defense method and its variants.}
\label{tab:variants}
\scriptsize
\setlength{\tabcolsep}{2pt}
\centering
\begin{tabular}{@{}lcccc@{}}
\toprule
 \bf Datasest & \textbf{CIFAR-10} &  \textbf{VOC} &  \textbf{En-Vi} & \textbf{De-En}\\ 
 \midrule
 Baseline & 94.46$\pm$0.164 & 74.90$\pm$0.200 & 30.64$\pm$0.015 & 35.32$\pm$0.131 \\
 \midrule
 ACRT~\citep{attack-paper} & 96.23$\pm$0.031 & 75.13$\pm$0.306 & 30.71$\pm$0.053 & 35.53$\pm$0.146 \\
 AWP~\citep{Adversarial_data_Weight_Perturbation} & 96.08$\pm$0.093 & 75.03$\pm$0.153 & 30.62$\pm$0.167 & 35.49$\pm$0.229 \\
 \midrule
 Proposed & \textbf{96.34$\pm$0.076} & \textbf{75.77$\pm$0.152} & \textbf{31.09$\pm$0.102} & \textbf{35.88$\pm$0.053} \\
\bottomrule
\end{tabular}
\end{table}

\begin{table}[!t]
\caption{Results of enhancing BERT on SST-2.}
\label{tab:bert}
\small
\setlength{\tabcolsep}{1pt}
\centering
\begin{tabular}{@{}lcccc@{}}
\toprule
  \textbf{Approach} & Baseline & ACRT~\citep{attack-paper} & AWP~\citep{Adversarial_data_Weight_Perturbation} & Proposed \\
 \midrule
 \textbf{Acc} & 92.03$\pm$0.55 & \multicolumn{2}{c}{not converge} & \textbf{92.78$\pm$0.18}\\
\bottomrule
\end{tabular}
\end{table}

\begin{figure*}[!t]
\centering
\subcaptionbox{Transformer w/o defense. (En-Vi)}{\includegraphics[height=2.5 in,width=0.3\linewidth]{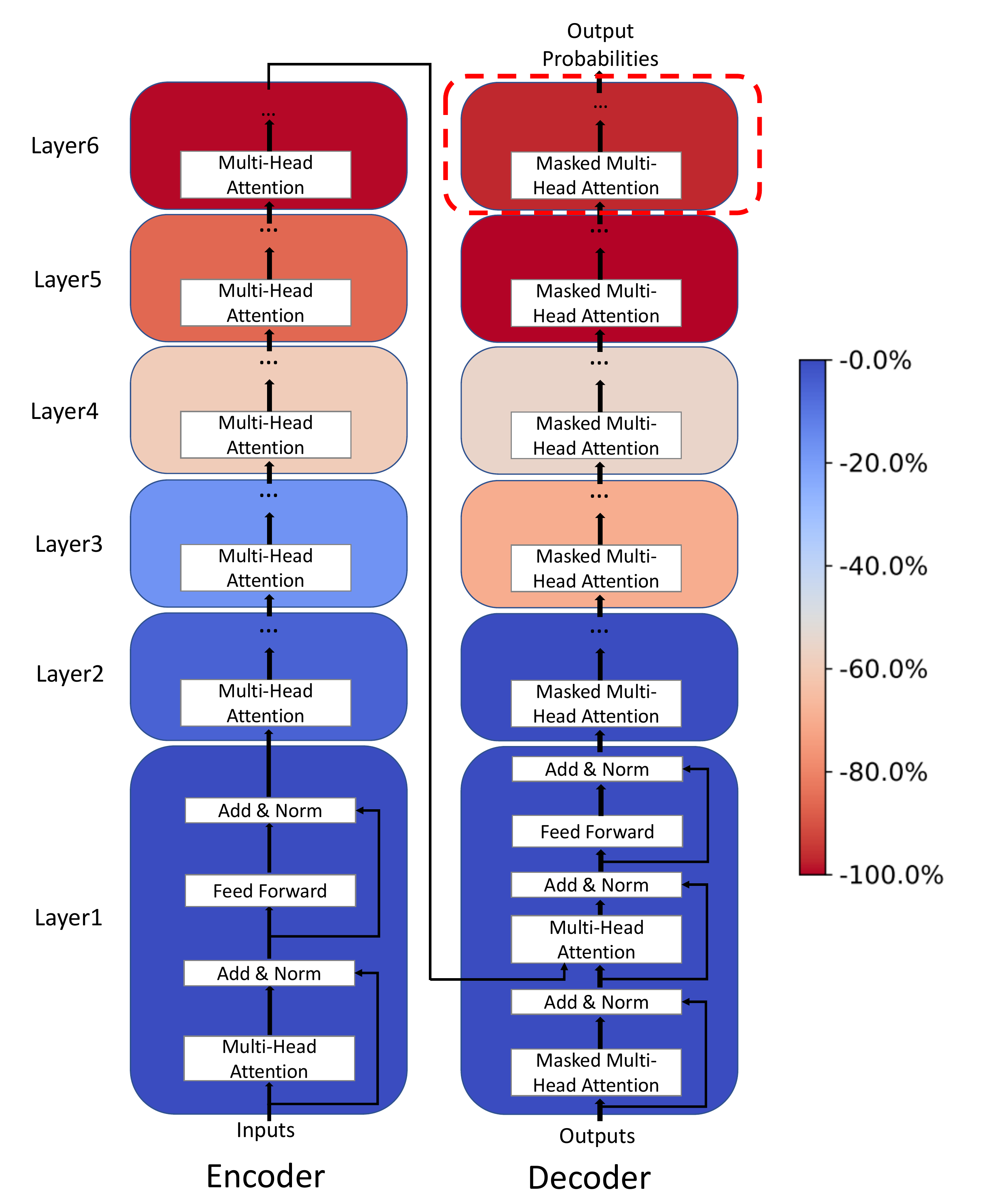}}
\hfil
\subcaptionbox{ Defend all layers. (En-Vi)}{\includegraphics[height=2.5 in,width=0.3\linewidth]{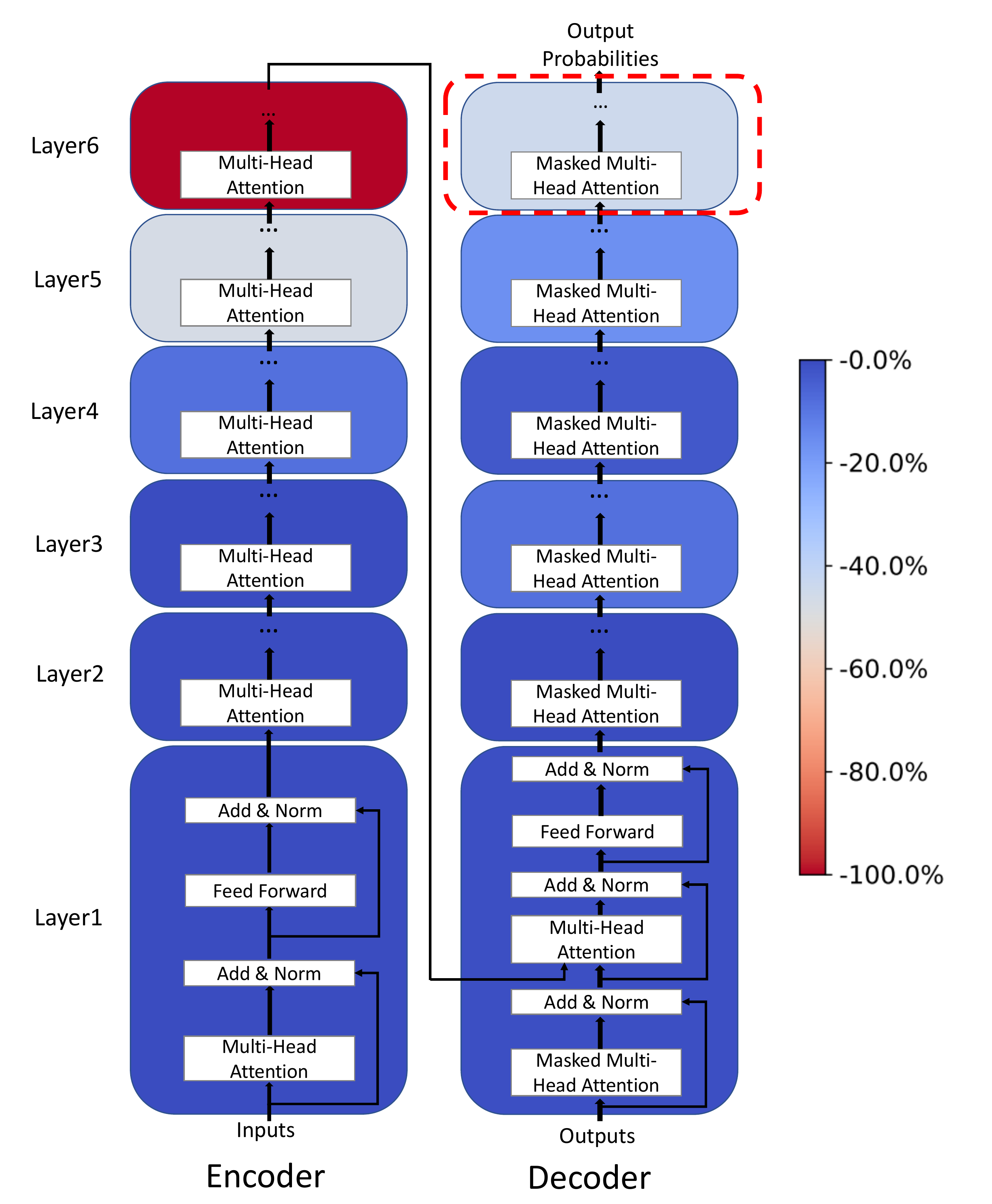}}
\hfil
\subcaptionbox{ Defend last layer. (En-Vi)}{\includegraphics[height=2.5 in,width=0.3\linewidth]{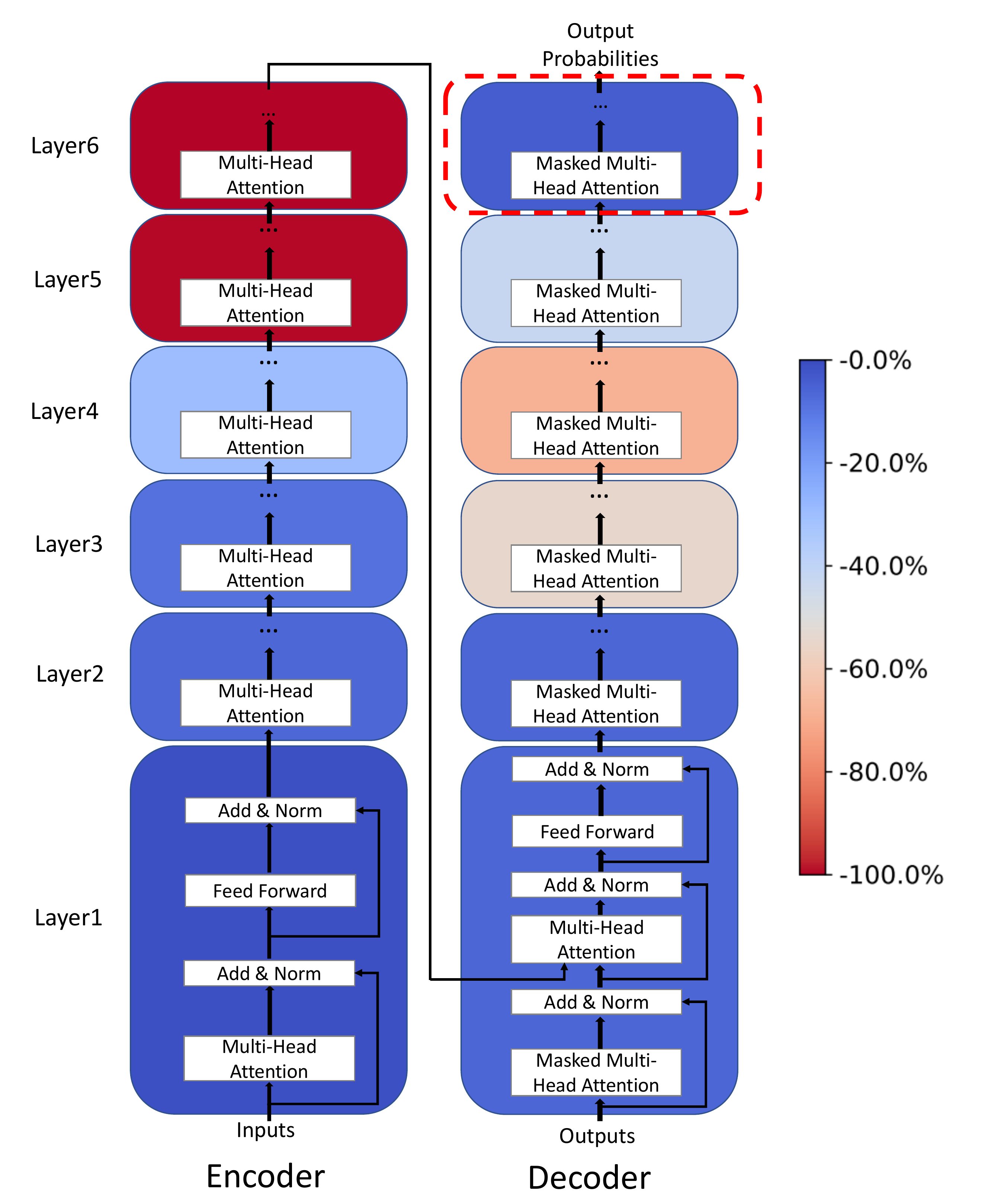}}
\hfil
\subcaptionbox{Transformer w/o defense. (De-En)}{\includegraphics[height=2.5 in,width=0.3\linewidth]{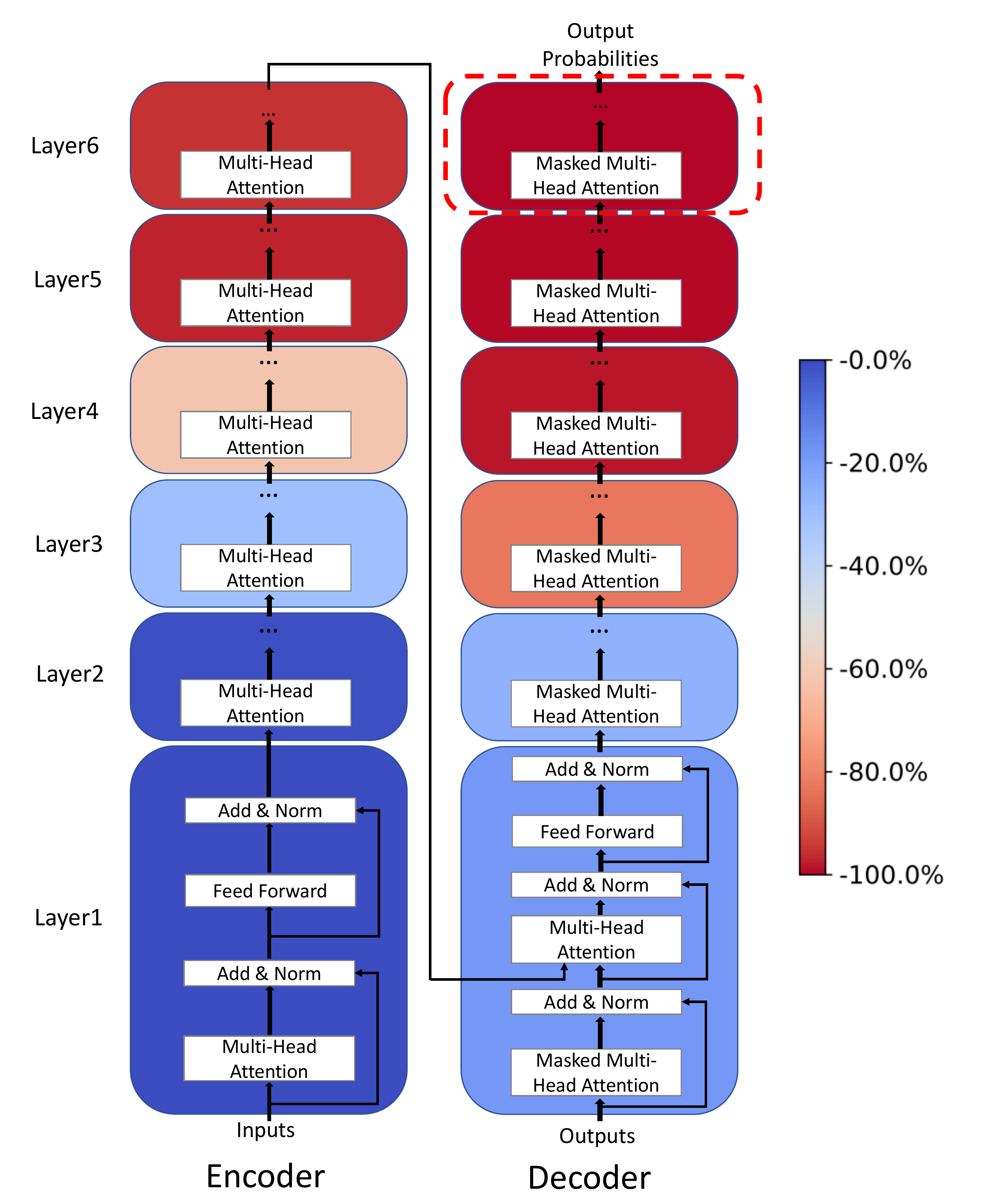}}
\hfil
\subcaptionbox{ Defend all layers. (De-En)}{\includegraphics[height=2.5 in,width=0.3\linewidth]{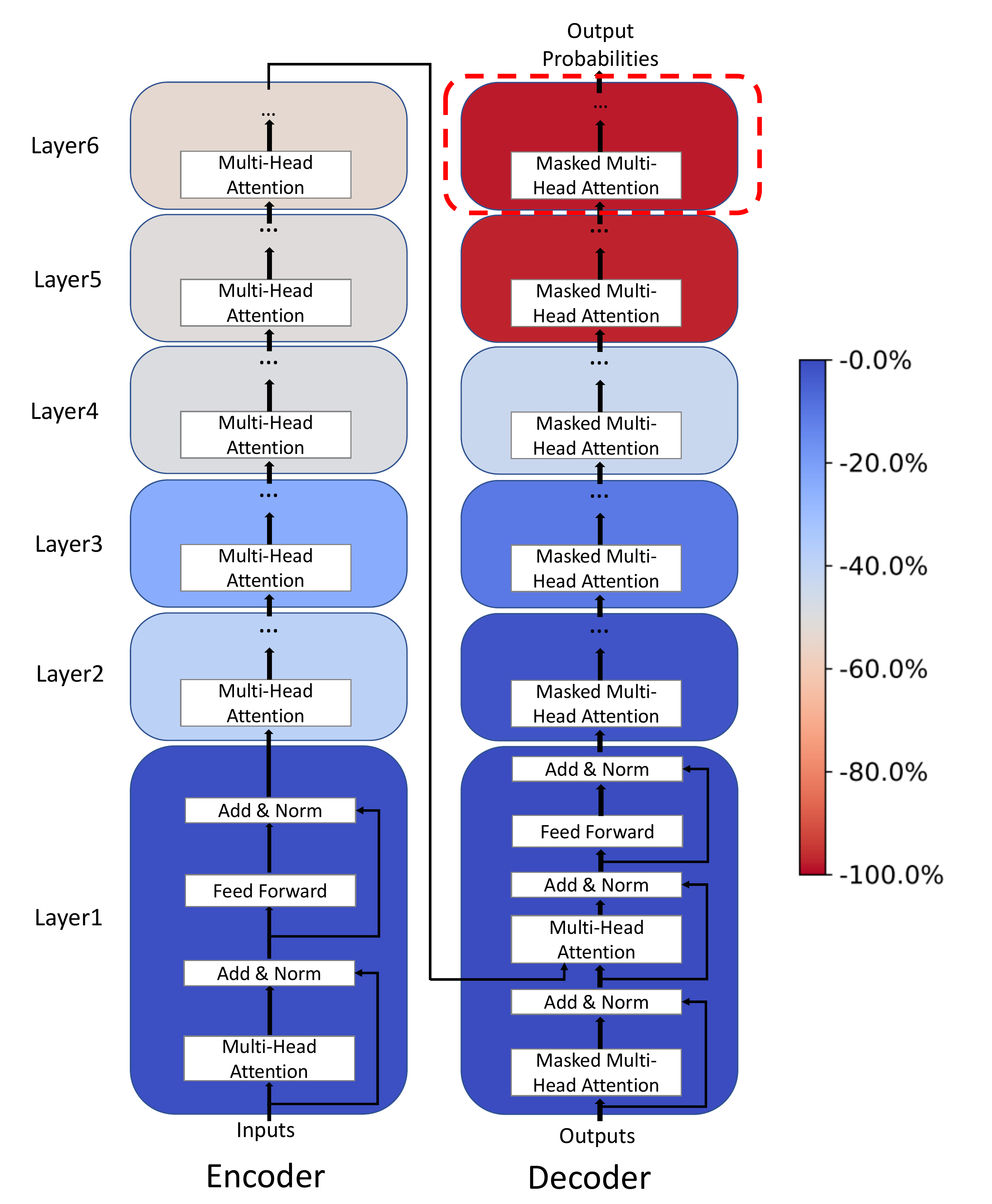}}
\hfil
\subcaptionbox{ Defend last layer. (De-En)}{\includegraphics[height=2.5 in, width=0.3\linewidth]{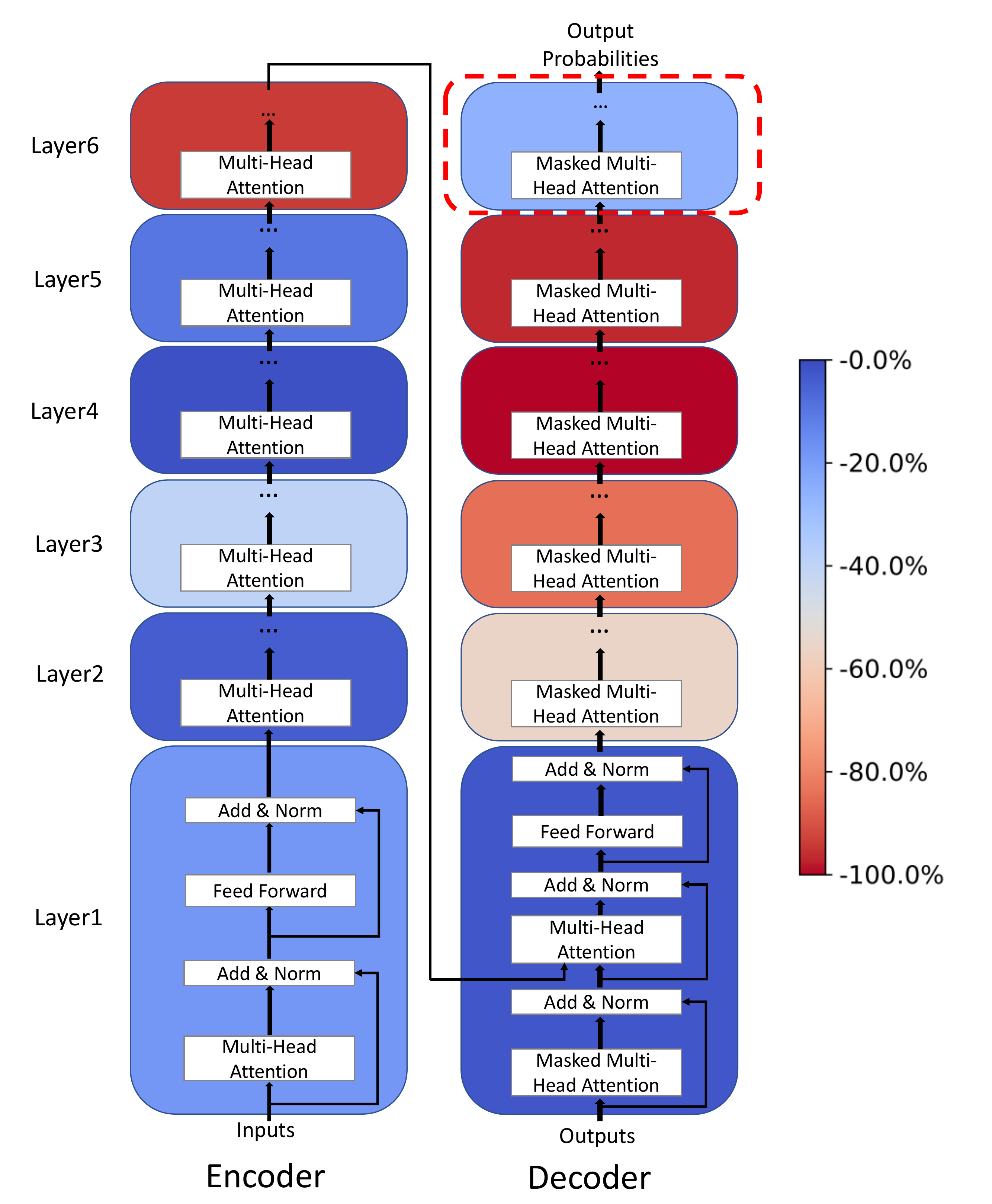}}
\hfil
\caption{
Visualization of different Transformer layers with different defense settings under parameter corruptions. Not all layers are enhanced when defending all parameters. However, defending a specific layer can improve the robustness of this layer.
}
\label{fig:layer-def}
\end{figure*}

\subsection{Enhancing Pretrained Language Model}

We further verify the effectiveness of the proposed defense method on BERT~\citep{Bert}. BERT~\citep{Bert} is a transformer-based pretrained language model, which can be adopted in downstream tasks after finetuning. We finetune a BERT on the SST-2~\citep{SST-2} (Stanford Sentiment Treebank) sentiment classification dataset. We do not consider the setting of the start epoch because BERT is already pretrained. Detailed settings are in Appendix A. The experimental results are in Table~\ref{tab:bert}. The training process does not converge with defense methods in previous researches, while the proposed defense method can in the end enhance pretrained language models.

\subsection{Hypothesis Testing}
Further, we conduct hypothesis testings on the CIFAR-10, VOC, En-Vi, De-En, and SST-2 datasets to testify whether: (1) The proposed defense method outperforms the baselines; and (2) The proposed defense method outperforms its variants. The results show that on all datasets, our proposed defense method outperforms the baselines and its variants significantly $(p<0.05)$. Besides, on CIFAR-10, ACRT~\citep{attack-paper} and AWP~\citep{Adversarial_data_Weight_Perturbation} outperform the baselines significantly $(p<0.05)$. However, on VOC, En-Vi, and De-En, they do not outperform the baselines significantly $(p<0.05)$. On the SST-2 dataset, the loss does not converge. Please refer to Appendix C for details.

\subsection{Probing Different Layers of DNN}

The parameter corruption algorithm can be utilized to probe the robustness of different groups of parameters. The Transformer models in our experiments can be divided into 6 encoder layers and 6 decoder layers. We probe and visualize the vulnerability of different layers of Transformer via multi-step adversarial parameter corruption as in Figure~\ref{fig:layer-def}. In Figure~\ref{fig:layer-def} (a\&d), we can see that the higher layers in the Transformer encoder or decoder are less robust to parameter corruptions. It is possible because the changes in the output of lower layers due to its parameter corruption will then be amended by the normalization layers, making the model more robust to perturbation. 

In Figure~\ref{fig:layer-def} (b\&e), we can see that the defense method improves the robustness of the whole layers. However, some high layers are still sensitive to parameter perturbations. If we want to enhance a certain layer in the model, the defense method may not work efficiently. To solve this issue, we propose to defend locally for a certain layer. Take the last decoder layer of Transformer for example: we set $\vect{w}$ as parameters in the last decoder layer and $\bm\theta$ as other parameters. The experimental results are shown in Table~\ref{tab:part-def} and Figure~\ref{fig:layer-def} (c\&f). It can be seen that the overall accuracy and robustness are improved compared to the baseline. As expected, the last decoder layer is significantly enhanced in terms of parameter robustness. Therefore, defending a certain group of parameters localized can enhance the targeted group of parameters contrapuntally.

\begin{table}[!t]
        \centering
        \tiny
        \setlength{\tabcolsep}{3 pt}
        \caption{Results of defending the last decoder layer in Transformer, compared to other settings. Corrupt whole denotes that all parameters can be corrupted. Corrupt last layer denotes that the parameters in the last layer in the transformer decoder can be corrupted.}
        \label{tab:part-def}
        \begin{tabular}{@{}lccc@{}}
            \toprule
             \multicolumn{4}{c}{\textbf{En-Vi (BLEU)}} \\
             \midrule
              & w/o Corruption & Corrupt whole & Corrupt last layer \\
            \midrule
              w/o defense & 30.64 & 16.54 & 0.98 \\
             Defend all layers & 31.09 & \textbf{30.71} & 17.36 \\
             Defend the last layer & \textbf{31.24} & 12.34 & \textbf{30.01} \\
             \midrule
             \multicolumn{4}{c}{\textbf{De-En (BLEU)}}\\ 
             \midrule
              & w/o Corruption & Corrupt whole &  Corrupt last layer \\
             \midrule
              w/o defense & 35.32 & 18.30 & 0.00\\
             Defend all layers & \textbf{35.88} & \textbf{35.72} & 0.82\\
             Defend the last layer & 35.44 & 18.10 & \textbf{22.45} \\
             \bottomrule
        \end{tabular}
\end{table}

\section{Related Work}

\subsection{Adversarial Examples and Adversarial Training} 

\cite{Intriguing_properties_of_neural_networks} first proposed the concept of adversarial examples and found that neural networks are vulnerable to adversarial attacks on input data. Adversarial examples can mislead neural networks even in physical world scenarios, such as making small perturbations to signals from cameras as the input data~\citep{Adversarial_examples_in_the_physical_world}. A line of existing studies concerning adversarial attacks on neural networks focused on generating adversarial examples~\citep{Explaining_and_Harnessing_Adversarial_Examples,Adversarial_examples_in_the_physical_world,Deepfool}. Other related work on adversarial examples aimed to design adversarial training algorithms with respect to adversarial examples to improve the robustness of neural networks over adversarial examples~\citep{Explaining_and_Harnessing_Adversarial_Examples,Towards_Evaluating_the_Robustness_of_Neural_Networks,YOPO,freeLB,Unified-min-max}.

\cc{Besides adversarial training algorithms,} \cite{related_2_Stochastic_Activation_Pruning} proposed stochastic activation pruning to defense adversarial examples, and \cite{related_6_enforcing_Feature-Consistency-Across-Bit-Planes} proposed to enforce the consistency of features on the bit planes for better adversarial robustness. \cite{related_4_corruption_rewards_transition_probabilities, related_5_multi-armed_bandits} considered adversarial corruptions in reinforcement learning, such as corruptions in rewards or transition probabilities. \cite{related_1_interpolating_function,related_3_Maximum-Entropy-Adversarial-Data-Augmentation} adopted mechanisms similar to adversarial training for adversarial data augmentation.

\subsection{Parameter Robustness in Neural Networks}

Existing studies also concerned changes~\citep{LCA} or perturbations in network parameters, caused by  training data poisoning~\citep{badnet, backdoor1, backdoor2, backdoor-Bert}, bit flipping~\citep{TBT}, compression~\citep{Stronger_Generalization_Compression} or parameter quantization~\citep{Data-Free-Quant,tensorRT,parameter_L1}. To drive the parameters from the sharp minima and improve the parameter robustness, existing researches adopted gradient regularization~\citep{parameter_L1,attack-paper,Adversarial_Defense_gradient_flatness} or attempted to minimize the risk of adversarial parameter perturbation with an estimated optimal parameter corruption. The optimal parameter corruption can be generated by the gradient-based corruption~\citep{attack-paper}, the PGD algorithm~\citep{Regularizing_NN_via_Adversarial_Perturbation_L2,Sharpness-Aware_Minimization}, or the PGD algorithm combined with adversarial examples~\citep{Adversarial_data_Weight_Perturbation}. Different from these studies, we estimate the risk with multi-step parameter corruptions and average the risks for multiple parameter corruptions as an estimation of the risk.

\section{Conclusion}
In this work, we introduce the concept of parameter corruption and propose the multi-step parameter corruption algorithm for probing the parameter robustness of neural networks. To enhance neural networks, we propose the adversarial parameter defense algorithm that minimizes the risk of parameter corruption based on multi-step risk estimation. Experimental results show that our proposed defense algorithm can improve both the accuracy and the robustness of neural networks under multiple parameter corruption methods, including parameter corruptions, random noises, or real-world quantization. 

\section*{Acknowledgments}
This work is partly supported  by National Key R\&D Program of China No. 2019YFC1521200. This work is also partly supported by Beijing Academy of Artificial Intelligence (BAAI). 
Xu Sun and Qi Su are the corresponding authors.

\bibliography{mybibfile}
\onecolumn
\appendix

\section{Model Implementation}
This section shows the implementation details of neural networks used in our experiments. Experiments are conducted on a GeForce GTX TITAN X GPU.

\subsection{ResNet}

\subsubsection{CIFAR-10}

CIFAR-10\footnote{CIFAR-10 can be found at \url{https://www.cs.toronto.edu/~kriz/cifar.html}} \cite{CIFAR-10} is an image classification dataset with 10 categories and consists of 50,000 training images and 10,000 test images. The images are of 32-by-32 pixel size with 3 channels.

For CIFAR-10, we implement Resnet-101~\cite{resnet} as baseline. The learning rate is 0.1, the weight decay is $5\times 10^{-4}$ and momentum is 0.9, and batch size is 128. The optimizer is SGD. We train the model for 200 epochs. After 150 epochs, the learning rate is adjusted to $0.01$. 

In defense, the start epoch is 10, $K=1$, $\epsilon=0.1$, and we adopt $L_2$ constraint. In ACRT~\citep{attack-paper} or SAM~\citep{Sharpness-Aware_Minimization}, the start epoch is 10, $\epsilon=0.1$, and we adopt $L_2$ constraint. In AWP~\citep{Adversarial_data_Weight_Perturbation}, the start epoch is 10, when generating virtual parameter corruption $K=3$, $\epsilon=0.1$, and we adopt $L_2$ constraint.
 
\subsubsection{VOC}
PASCAL VOC 2007\footnote{VOC can be found at \url{http://www.pascal-network.org/challenges/VOC/voc2007/index.html}}\cite{VOC} is an object detection dataset, it consists of 5,000 train-valid images and 5,000 test images over object categories.

For VOC, we implement Faster-RCNN\footnote{The implementation of baseline Faster-RCNN can be found at \url{https://github.com/jwyang/faster-rcnn.pytorch}}~\cite{faster-rcnn} and Resnet-101~\cite{resnet} as baseline. The learning rate is 0.001, the batch size is 1. The optimizer is SGD. We train the model for 7 epochs. After 5 epochs, the learning rate is adjusted to 0.0001. 

In defense, the start epoch is 1, $K=2$, $\epsilon=0.02$, and we adopt $L_2$ constraint. In ACRT~\citep{attack-paper} or SAM~\citep{Sharpness-Aware_Minimization}, the start epoch is 1, $\epsilon=0.04$, and we adopt $L_2$ constraint. In AWP~\citep{Adversarial_data_Weight_Perturbation}, the start epoch is 1, when generating virtual parameter corruption $K=3$, $\epsilon=0.04$, and we adopt $L_2$ constraint.

\subsection{Transformer}

\subsubsection{De-En}
The De-En dataset is provided by the IWSLT 2014 Evaluation Campaign~\citep{2014iwslt}. We use the same dataset splits following previous work~\citep{fairseq,seq2seqRNN,beam}. It contains 153K sentences for training, 7K sentences for validation, and 7K sentences for testing. BPE is used to get vocabulary. We use the shared embedding setting and the vocabulary size is 10,149. 

We use ``transformer\_iwslt\_de\_en''  provided by fairseq\footnote{Both implementations of baseline Transformers and two datasets can be found at \url{https://github.com/pytorch/fairseq}}\cite{fairseq} as our basic model. We use optimizer Adam with $\beta_1 = 0.9$ and $\beta_2=0.98$. The dropout rate is 0.3. The attention dropout rate is 0.1. The activation dropout is 0.1. The initialization learning rate is $10^{-7}$ and the learning rate is 0.0015. The training batch size is  4,096 tokens. We update gradients for every 2 steps. We train the model for 70 epochs, and the number of warmup steps is 8,000. We average the last 10 checkpoints for evaluation and set the beam size to 5.

In defense, the start epoch is 30, $K=2$, $\epsilon=0.0006$ and we adopt $L_{+\infty}$ constraint. When defending the last layer of the decoder, the start epoch is 30, $K=1$, $\epsilon=0.002$ and we adopt $L_{+\infty}$ constraint. In ACRT~\citep{attack-paper} or SAM~\citep{Sharpness-Aware_Minimization}, the start epoch is 30, $\epsilon=0.0004$, and we adopt $L_{+\infty}$ constraint. In AWP~\citep{Adversarial_data_Weight_Perturbation}, the start epoch is 10, when generating virtual parameter corruption $K=2$, $\epsilon=0.0004$, and we adopt $L_{+\infty}$ constraint.

In Figure 3 (d), (e) and (f), we set $\epsilon=0.02$. In Table 5, we set $\epsilon=0.005$, $n=100$ and adopt $L_{+\infty}$ constraint when corrupting the whole model, and we set $\epsilon=0.02$, $n=100$ and adopt $L_{+\infty}$ constraint when corrupting the last decoder layer.

\subsubsection{En-Vi}
The En-Vi dataset contains 133K training sentence pairs provided by the IWSLT 2015 Evaluation Campaign~\citep{2015iwslt}. We use TED
tst2012 (1,553 sentences) as the validation set and TED tst2013 (1,268 sentences) as the test set. BPE is used to get input and output vocabulary. The English and Vietnamese vocabulary sizes are 7,669 and 6,669 respectively. 

We use ``transformer\_wmt\_en\_de''  provided by fairseq\cite{fairseq} as our basic model. We use optimizer Adam with $\beta_1 = 0.9$ and $\beta_2= 0.98$. The dropout rate is 0.1. The learning rate is 0.001. The training batch size is  4,096 tokens. We train the model for 52 epochs, and the number of warmup steps is 8,000. We average the last 10 checkpoints for evaluation and set the beam size to 5. 

In defense, the start epoch is 35, $K=2$, $\epsilon=0.00045$ and we adopt $L_{+\infty}$ constraint. When defending the last layer of the decoder, the start epoch is 35, $K=3$, $\epsilon=0.0015$ and we adopt $L_{+\infty}$ constraint. In ACRT~\citep{attack-paper} or SAM~\citep{Sharpness-Aware_Minimization}, the start epoch is 30, $\epsilon=0.0012$, and we adopt $L_{+\infty}$ constraint. In AWP~\citep{Adversarial_data_Weight_Perturbation}, the start epoch is 30, when generating virtual parameter corruption $K=2$, $\epsilon=0.0012$, and we adopt $L_{+\infty}$ constraint.

In Figure 3 (d), (e) and (f), we set $\epsilon=0.01$. In Table 5, we set $\epsilon=0.002$, $n=100$ and adopt $L_{+\infty}$ constraint when corrupting the whole model, and we set $\epsilon=0.01$, $n=100$ and adopt $L_{+\infty}$ constraint when corrupting the last decoder layer.

\subsection{BERT}

SST-2~\citep{SST-2} is the Stanford Sentiment Treebank classification dataset with two classes. It includes 63,750 training sentences, 873 development sentences, and 1,820 test sentences. In our paper, we adopt the development sentences as the test set. The sentences are preprocessed to lowercased and tokenized by the uncased BERT tokenizer. Lengths of sentences are truncated to 128 tokens (including special tokens).

We fine-tune the uncased BERT base model~\citep{Bert}\footnote{The implementation~\citep{huggingface-BERT} can be found at \url{https://github.com/huggingface/transformers}} as our basic model. We adopt the AdamW optimizer. The training batch size is 32 and the learning rate is $5\times 10^{-5}$. We fine-tuning the model for 3 epochs. 

In defense, the start epoch is 0, $K=1$, $\epsilon=0.5$ and we adopt $L_{+\infty}$ constraint.  In ACRT~\citep{attack-paper} or SAM~\citep{Sharpness-Aware_Minimization}, and AWP~\citep{Adversarial_data_Weight_Perturbation}, we grid search many settings, but the loss always diverges.

\subsection{Details of Corruption Approaches}

To verify the robustness of models, baseline models and models with defense algorithms are tested by four testing corruption approaches: (1) Our proposed multi-step adversarial corruption method; (2) Our proposed gradient-based corruption method; (3) Random Gaussian or uniform noises on parameters to simulate random corruptions; (4) Tensor-RT~\citep{tensorRT} weight quantization method, which quantifies parameters into $n$-bit signed integers.

In the gradient-based corruption method~\citep{attack-paper}, all parameters can be corrupted and the number of corrupted parameters are not limited. The weight quantization method is a uniform scheme following Tensor-RT~\citep{tensorRT}. The parameters are divided into parameter groups, where a group is usually a weight matrix or bias vector. For a parameter group with the floating-point base $W_\text{float}$, the quantized $n$-bit fixed-point (signed integer) counterpart $W_\text{fixed}$ is:
\begin{equation}
    w_0=\frac{\max(|W_\text{float}|)}{2^{n-1}-1},\quad W_\text{fixed} = \text{round}(\frac{W_\text{float}}{w_0}) \cdot w_0.
\end{equation}

\section{Computational Complexity of Our Defense Algorithm}

Our defense algorithm involves $K+1$ forward and backward propagation and $K$ times to generating new corruptions. Generating a new corruption should be trivial because we can solve it by the closed-form solution given the gradients in backward propagation.

First, consider the time complexity of our defense algorithm. Suppose $T_f$ and $T_b$ denote the forward and backward time cost of ordinary training of the baseline model. Then the time complexity of our defense algorithm is approximate $(K+1)(T_f+T_b)$, which is about $K+1$ times of baseline.

Suppose $M_f$ and $M_b$ denote the forward and backward memory cost of ordinary training of the baseline model. In our defense algorithm, we do not need to save all gradients in $K+1$ steps (including step $0$). Our goal is to obtain the average gradient in $K+1$ steps. Therefore, we can save the memory cost by saving only the partial sum of gradients for the final optimization and the gradient in the current step. After every forward propagation, we can free the memory occupied. Therefore, the time complexity of our defense algorithm is approximately $M_f+2M_b$, which is less than two times of baseline.

\section{Hypothesis test}

\begin{table}[!t]
\caption{Details of hypothesis testings. $t$-values marked with $^\#$ are smaller than $2.132$ and namely not statistically significantly $(p<0.05)$.}
\label{tab:t-test}
\scriptsize
\setlength{\tabcolsep}{2pt}
\centering
\begin{tabular}{@{}lccccc@{}}
\toprule
 \bf Datasest & \textbf{CIFAR-10} &  \textbf{VOC} &  \textbf{En-Vi} & \textbf{De-En} & \textbf{SST-2}\\ 
 \midrule
 Baseline & 94.46$\pm$0.164 & 74.90$\pm$0.200 & 30.64$\pm$0.015 & 35.32$\pm$0.131 & 92.03$\pm$0.55 \\
 \midrule
 ACRT~\citep{attack-paper} & 96.23$\pm$0.031 & 75.13$\pm$0.306 & 30.71$\pm$0.053 & 35.53$\pm$0.146 & $-\infty$ (not converge)\\
 AWP~\citep{Adversarial_data_Weight_Perturbation} & 96.08$\pm$0.093 & 75.03$\pm$0.153 & 30.62$\pm$0.167 & 35.49$\pm$0.229 & $-\infty$ (not converge)\\
 \midrule
 Proposed & \textbf{96.34$\pm$0.076} & \textbf{75.77$\pm$0.152} & \textbf{31.09$\pm$0.102} & \textbf{35.88$\pm$0.053} &  \textbf{92.78$\pm$0.18}\\
 \midrule
 \textbf{Null Hypothesis} & \multicolumn{5}{c}{\textbf{$t$-values}} \\ 
 \midrule
 Proposed$\le$Baseline & 18.01 & 6.00 & 7.55 & 6.88 &  2.24 \\
 \midrule
 Proposed$\le$ACRT & 2.32 & 5.95 & 4.16 &2.88 &  $+\infty$ \\
 Proposed$\le$AWP & 3.17 & 3.87 &  5.72 & 3.91 & $+\infty$ \\
 \midrule
 ACRT$\le$Baseline & 14.88 & 0.89$^\#$ &  2.20 & 1.12$^\#$ & $-\infty^\#$ \\
 AWP$\le$Baseline &  18.37 & 3.06 & -0.21$^\#$& 1.86$^\#$ & $-\infty^\#$ \\
\bottomrule
\end{tabular}
\end{table}
In this section, we conduct hypothesis testings on the CIFAR-10, VOC, En-Vi, De-En, and SST-2 datasets to verify that: (1) Our proposed defense method outperforms the baselines; and (2) Our proposed defense method outperforms its variants. 

We conduct the student-$t$ test and the time of repeated experiments $n=3$. $df=2(n-1)=4$, when $p=0.05$, $t\text{-value}=2.132$. The details of hypothesis testings are shown in Table~\ref{tab:t-test}.

The results show that on all datasets, our proposed defense method outperforms the baselines and its variants statistically significantly $(p<0.05)$. Besides, on CIFAR-10, ACRT~\citep{attack-paper} and AWP~\citep{Adversarial_data_Weight_Perturbation} outperform the baselines statistically significantly $(p<0.05)$. However, on VOC, En-Vi and De-En, they do not outperform the baselines statistically significantly $(p<0.05)$. On the SST-2 dataset, they do not converge.

\section{Theoretical Analysis}

\subsection{Proof of Theorem~\ref{thmA:random}}
\begin{thmA}[Distribution of Random Corruption]
\label{thmA:random}
Given the constraint set  $S=\{\vect{a}:\|\vect{a}\|_2= \epsilon\}$ and a generated random corruption $\vect{\tilde a}$, which in turn obeys a uniform distribution on $\|\vect{\tilde a}\|_2=\epsilon$. The estimation of $\Delta_\text{ave}\mathcal{L}(\vect{w}, S, \mathcal{D})$ and $\Delta_\text{max}\mathcal{L}(\vect{w}, S, \mathcal{D})$ are:
\begin{align}
\Delta_\text{ave}\mathcal{L}(\vect{w}, S, \mathcal{D})&=\frac{tr(\textbf{H})}{2k}\epsilon^2+o(\epsilon^2),\\
\Delta_\text{max}\mathcal{L}(\vect{w}, S, \mathcal{D})&=\epsilon G+o(\epsilon). \label{eqA:expectation}
\end{align}
Define $\eta=\nicefrac{|\vect{\tilde a}^\text{T}\vect{g}|}{\epsilon G}$, which is a first-order estimation of  $\nicefrac{|\Delta\mathcal{L}(\vect{w}, \vect{\tilde a}, \mathcal{D})| }{\Delta_\text{max}\mathcal{L}(\vect{w}, S, \mathcal{D})}$ and $\eta\in [0, 1]$, then the probability density function $p_\eta(x)$ of $\eta$ and the cumulative density $P(\eta \le x)$ function of $\eta$ are:
\begin{align}
p_{\eta}(x)&=\frac{2\Gamma(\frac{k}{2})}{\sqrt{\pi}\Gamma(\frac{k-1}{2})}(1-x^2)^{\frac{k-3}{2}}, \label{equA:random_destiny1}\\
P(\eta \le x)&=\frac{2xF_1(\frac{1}{2}, \frac{3-k}{2};\frac{3}{2}; x^2)}{B(\frac{k-1}{2}, \frac{1}{2})}, \label{equA:random_destiny2}
\end{align}
where $k$ denotes the number of corrupted parameters, and $\Gamma(\cdot)$, $B(\cdot,\cdot)$ and $F_1(\cdot,\cdot;\cdot;\cdot)$ denote the gamma function, beta function and hyper-geometric function.
\end{thmA}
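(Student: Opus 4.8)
The plan is to treat the four claims in two groups: the two indicator estimates, which follow quickly from the second-order Taylor expansion together with the symmetry of the uniform measure on the sphere, and the density and CDF of $\eta$, which form the genuinely technical core.

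For $\Delta_\text{ave}\mathcal{L}$, I would start from $\Delta\mathcal{L}(\vect{w},\vect{a};\mathcal{D}) = \vect{a}^\text{T}\vect{g} + \frac{1}{2}\vect{a}^\text{T}\textbf{H}\vect{a} + o(\epsilon^2)$ and take the expectation over $\vect{\tilde a}$ uniform on the sphere of radius $\epsilon$. Central symmetry gives $\expect[\vect{\tilde a}] = \vect{0}$, killing the linear term, while rotational invariance forces $\expect[\vect{\tilde a}\vect{\tilde a}^\text{T}] = \frac{\epsilon^2}{k}\textbf{I}$: the diagonal entries are equal and sum to the constant $\expect[\norm{\vect{\tilde a}}_2^2] = \epsilon^2$, and the off-diagonal entries vanish under the sign-flip symmetry of each coordinate. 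Hence $\expect[\vect{\tilde a}^\text{T}\textbf{H}\vect{\tilde a}] = tr(\textbf{H}\,\expect[\vect{\tilde a}\vect{\tilde a}^\text{T}]) = \frac{\epsilon^2}{k}tr(\textbf{H})$, yielding $\frac{tr(\textbf{H})}{2k}\epsilon^2 + o(\epsilon^2)$. For $\Delta_\text{max}\mathcal{L}$, I would maximize the first-order term $\vect{a}^\text{T}\vect{g}$ over $\norm{\vect{a}}_2 = \epsilon$; Cauchy--Schwarz gives the maximizer $\vect{a} = \epsilon\vect{g}/G$ with value $\epsilon G$, and the residual Hessian contribution is $O(\epsilon^2) = o(\epsilon)$.

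For the law of $\eta$, the key reduction is to write $\vect{\tilde a} = \epsilon\vect{u}$ with $\vect{u}$ uniform on the unit sphere $S^{k-1}$, so that $\eta = |\vect{u}^\text{T}\vect{g}|/G = |\vect{u}^\text{T}\hat{\vect{g}}|$ with $\hat{\vect{g}} = \vect{g}/G$ a fixed unit vector. By rotational invariance this equals the absolute value of a single coordinate of a uniform point on $S^{k-1}$. I would then derive (by slicing the sphere at fixed first coordinate $t$, where the complementary coordinates fill a $(k-2)$-sphere of radius $\sqrt{1-t^2}$) that this coordinate has density proportional to $(1-t^2)^{(k-3)/2}$ on $[-1,1]$; normalizing against $\int_{-1}^{1}(1-t^2)^{(k-3)/2}\,dt = B(\frac{1}{2},\frac{k-1}{2})$ and folding the symmetric density onto $[0,1]$ produces Eq.~(\ref{equA:random_destiny1}).

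Finally, for the CDF I would integrate $p_\eta$ from $0$ to $x$, expand $(1-s^2)^{(k-3)/2}$ by the binomial series as $\sum_n \frac{((3-k)/2)_n}{n!}s^{2n}$, integrate termwise, and use the identity $\frac{1}{2n+1} = \frac{(1/2)_n}{(3/2)_n}$ to recognize the resulting series as $x\,F_1(\frac{1}{2},\frac{3-k}{2};\frac{3}{2};x^2)$; rewriting the prefactor through $\frac{\Gamma(k/2)}{\sqrt{\pi}\,\Gamma((k-1)/2)} = 1/B(\frac{k-1}{2},\frac{1}{2})$ then gives Eq.~(\ref{equA:random_destiny2}). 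The main obstacle is this last group: obtaining the marginal $(1-t^2)^{(k-3)/2}$ law of a sphere coordinate cleanly, and performing the Pochhammer bookkeeping that identifies the integrated density with $F_1$ rather than leaving it as an opaque incomplete-beta-type integral. The indicator estimates, by contrast, are essentially symmetry computations.
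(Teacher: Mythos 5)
Your proposal is correct and follows essentially the same route as the paper's proof: the indicator estimates via the second-order Taylor expansion and the sphere's symmetries ($\mathbb{E}[a_i]=\mathbb{E}[a_ia_j]=0$, $\mathbb{E}[a_i^2]=\epsilon^2/k$, Cauchy--Schwarz for the maximum), and the law of $\eta$ by rotational invariance reducing it to the absolute first coordinate of a uniform point on $S^{k-1}$ with marginal density proportional to $(1-t^2)^{(k-3)/2}$. The only cosmetic differences are that the paper works in spherical coordinates and obtains the CDF first before differentiating, whereas you derive the density first and then integrate, making the hypergeometric identification explicit via the termwise series expansion that the paper merely quotes.
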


The detailed definitions of the gamma function, beta function and hyper-geometric function are as follows: $\Gamma(\cdot)$ and $B(\cdot,\cdot)$ denote the gamma function and beta function, and $F_1(\cdot,\cdot;\cdot;\cdot)$ denotes the Gaussian or ordinary hyper-geometric function, which can also be written as $ _2 F_1(\cdot,\cdot;\cdot;\cdot)$:
\begin{align}
\Gamma(z)&=\int_{0}^{+\infty}t^{z-1}e^{-t}dt,\\  B(p, q)&=\int_{0}^{1}t^{p-1}(1-t)^{q-1}dt,\\
F_1(a,b;c;z)&=1+\sum\limits_{n=1}^{+\infty}\frac{a(a+1)\cdots(a+n-1)\times b(b+1)\cdots(b+n-1)}{c(c+1)\cdots(c+n-1)}\frac{z^n}{n!}.
\end{align}

\begin{proof}

First, We will prove Eq.(\ref{eqA:expectation}). Note that $\vect{\tilde  a}$ obeys a uniform distribution on $\|\vect{\tilde a}\|_2=\epsilon$,
\begin{align}
\Delta\mathcal{L}(\vect{w},\vect{a}; \mathcal{D}) &= \vect{a}^\text{T}\vect{g}+\frac{1}{2}\vect{a}^\text{T}\textbf{H}\vect{a}+o(\epsilon^2)=\vect{a}^\text{T}\vect{g}+o(\epsilon), \\
\max\limits_{\|\vect{a}\|_2=\epsilon} \vect{a}^\text{T}\vect{g}&=\epsilon G.
\end{align}

Therefore,
\begin{align}
\Delta_\text{max}\mathcal{L}(\vect{w}, S, \mathcal{D})=\epsilon G+o(\epsilon).
\end{align}

Suppose $\vect{\tilde a}=(a_1, a_2, \cdots,a_{k-1}, a_{k})^\text{T}, \vect{g}=(g_1, g_2, \cdots,g_{k-1}, g_{k})^\text{T}$ and  $H_{ij}=\nicefrac{\partial^2\mathcal{L}(\vect{w}+\vect{a};\mathcal{D})}{\partial a_i \partial a_j}$. Since  $\vect{\tilde  a}$ obeys a uniform distribution on $\|\vect{\tilde a}\|_2=\epsilon$, by symmetry, we have, 
\begin{align}
\mathbb{E}_{\|\vect{\tilde a}\|_2=\epsilon}[a_i]&=
\mathbb{E}_{\|\vect{\tilde a}\|_2=\epsilon}[a_ia_j]=0\ (i\ne j)\\
\mathbb{E}_{\|\vect{\tilde a}\|_2=\epsilon}[a_i^2]&=
\mathbb{E}_{\|\vect{\tilde a}\|_2=\epsilon}[\frac{\|\vect{a}\|^2}{k}]=\frac{\epsilon^2}{k}.
\end{align}

Therefore,
\begin{align}
\mathbb{E}_{\|\vect{\tilde a}\|_2=\epsilon}[\Delta\mathcal{L}(\vect{w}, \vect{\tilde a}, \mathcal{D})] &= \mathbb{E}_{\|\vect{\tilde a}\|_2=\epsilon}[\vect{\tilde a}^\text{T}\vect{g}+\frac{1}{2}\vect{\tilde a}^\text{T}\textbf{H}\vect{\tilde a}+o(\epsilon^2)] \\
&=\mathbb{E}_{\|\vect{\tilde a}\|_2=\epsilon}[\vect{\tilde a}^\text{T}\vect{g}]+\mathbb{E}_{\|\vect{\tilde a}\|_2=\epsilon}[\frac{1}{2}\vect{\tilde a}^\text{T}\textbf{H}\vect{\tilde a}]+o(\epsilon^2)\\
&=\mathbb{E}_{\|\vect{\tilde a}\|_2=\epsilon}[\sum\limits_{i}g_ia_i]+\mathbb{E}_{\|\vect{\tilde a}\|_2=\epsilon}[\frac{1}{2}\sum\limits_{i, j}H_{ij}a_ia_j]+o(\epsilon^2)\\
&=\sum\limits_{i}H_{ii}\frac{\epsilon^2}{2k}+o(\epsilon^2)\\
&=\frac{\text{trace}(\textbf{H})}{2k}\epsilon^2+o(\epsilon^2).
\end{align}

Then, we will prove Eq.(\ref{equA:random_destiny2}). Because of the rotational invariance of the distribution of $\vect{\tilde  a}$, we may assume $\frac{\vect{g}}{\|\vect{g}\|_2}=(1, 0, 0, \cdots, 0)^\text{T}, \vect{\tilde  a}=(a_1, a_2, a_3,\cdots,a_{k-1}, a_{k})^\text{T}$ and,
\begin{equation} 
\left \{
\begin{aligned} 
a_1 &= \epsilon\cos\phi_1\\ 
a_2 &= \epsilon\sin\phi_1\cos\phi_2\\
a_3 &= \epsilon\sin\phi_1\sin\phi_2\cos\phi_3\\
&\quad \cdots\\
a_{k-1} &= \epsilon\sin\phi_1\sin\phi_2\cdots\sin\phi_{k-2}\cos\phi_{k-1}\\
a_{k} &= \epsilon\sin\phi_1\sin\phi_2\cdots\sin\phi_{k-2}\sin\phi_{k-1}\\
\end{aligned} 
\right.
;
\end{equation}
where $\phi_i\in[0, \pi]\ (i\ne k-1)$ and $\phi_{k-1}\in[0, 2\pi)$. For $x \in [0, 1]$, define $\alpha = \arccos x$, then:
\begin{equation}
f(\vect{\tilde  a})=\vect{\tilde  a}^\text{T}\vect{g}=\epsilon G\cos\phi_1, P(\eta \le x)=P(|\cos\phi_1| \le x)=2P(0\le\phi_1\le \alpha).
\end{equation}

That is to say,
\begin{align}
P(\eta\le x)&=\frac{2\int_0^{2\pi}\int_0^{\pi}\cdots\int_0^{\alpha}(\sin^{k-2}\phi_{1}\sin^{k-3}\phi_{2}\cdots \sin\phi_{k-2}) d\phi_1\cdots  d\phi_{k-2}d\phi_{k-1}}{\int_0^{2\pi}\int_0^{\pi}\cdots\int_0^{\pi}(\sin^{k-2}\phi_{1}\sin^{k-3}\phi_{2}\cdots \sin\phi_{k-2}) d\phi_1\cdots  d\phi_{k-2}d\phi_{k-1}}\\
&=\frac{2\int_0^{\alpha}\sin^{k-2}\phi_{1}d\phi_1}{\int_0^{\pi}\sin^{k-2}\phi_{1}d\phi_1}=\frac{\int_0^{\alpha}\sin^{k-2}\phi_{1}d\phi_1}{\int_0^{\frac{\pi}{2}}\sin^{k-2}\phi_{1}d\phi_1}=\frac{2\int_0^{\alpha}\sin^{k-2}\phi_{1}d\phi_1}{B(\frac{k-1}{2}, \frac{1}{2})} \label{equA:P}\\
&=\frac{2\cos\alpha F_1(\frac{1}{2}, \frac{3-k}{2};\frac{3}{2}; \cos^2\alpha)}{B(\frac{k-1}{2}, \frac{1}{2})}=\frac{2xF_1(\frac{1}{2}, \frac{3-k}{2};\frac{3}{2}; x^2)}{B(\frac{k-1}{2}, \frac{1}{2})},
\end{align}
and notice that:
\begin{align}
\sin\alpha=(1-x^2)^\frac{1}{2}, \big|\frac{d\alpha}{dx}\big|=\frac{1}{(1-x^2)^\frac{1}{2}}, B(p, q)=\frac{\Gamma(p)\Gamma(q)}{\Gamma(p+q)}, \Gamma(\frac{1}{2})=\sqrt{\pi},
\end{align}
then according to Eq.(\ref{equA:P}):
\begin{align}
p_\eta(x)=\frac{2\sin^{k-2}\alpha}{B(\frac{k-1}{2}, \frac{1}{2})}\big|\frac{d\alpha}{dx}\big|=\frac{2\Gamma(\frac{k}{2})}{\sqrt{\pi}\Gamma(\frac{k-1}{2})}(1-x^2)^{\frac{k-3}{2}}.
\end{align}
\end{proof}

\subsection{Proof of Theorem~\ref{thmA:bound}}
\begin{thmA}[Error Bound of the Gradient-Based Estimation]
\label{thmA:bound}
Suppose $\mathcal{L}(\vect{w};\mathcal{D})$ is convex and $L$-smooth with respect to $\vect{w}$ in the subspace $\{\vect{w}+\vect{a}:\vect{a}\in S\}$, where $S=\{\vect{a}:\|\vect{a}\|_p=\epsilon\text{ and }\|\vect{a}\|_0\le n\}$.\footnote{Note that $\mathcal{L}$ is only required to be convex and $L$-smooth in a neighbourhood of $\vect{w}$, instead of the entire $\mathbb{R}^k$.} Suppose $\vect{a^*}$ and $\vect{\hat a}$ are the optimal corruption and the gradient-based corruption in $S$ respectively. $\|\vect{g}\|_2=G>0$. It is easy to verify that $\mathcal{L}(\vect{w}+\vect{a^*};\mathcal{D})\ge \mathcal{L}(\vect{w+\vect{\hat a}};\mathcal{D})>\mathcal{L}(\vect{w};\mathcal{D})$ . It can be proved that the loss change of the gradient-based corruption is the same order infinitesimal of that of the optimal parameter corruption:
\begin{equation}
\frac{\Delta_\text{max}\mathcal{L}(\vect{w}, S; \mathcal{D})}{\Delta\mathcal{L}(\vect{w}, \vect{\hat a}; \mathcal{D})}=1+O\left(\frac{Ln^{g(p)}\sqrt{k}\epsilon}{G}\right);
\label{eqA:bound}
\end{equation}
where $g(p)$ is formulated as $g(p)=\max\{\frac{p-4}{2p}, \frac{1-p}{p}\}$.
\end{thmA}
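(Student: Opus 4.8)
The plan is to squeeze both the numerator $\Delta_\text{max}\mathcal{L}(\vect{w}, S; \mathcal{D})=\Delta\mathcal{L}(\vect{w}, \vect{a^*}; \mathcal{D})$ and the denominator $\Delta\mathcal{L}(\vect{w}, \vect{\hat a}; \mathcal{D})$ between the common first-order surrogate $f(\vect{a})=\vect{a}^\text{T}\vect{g}$ and a second-order slack, then read off the ratio. I would first record the two one-sided bounds that come straight from the hypotheses. Convexity gives, for every $\vect{a}\in S$,
\[
\Delta\mathcal{L}(\vect{w}, \vect{a}; \mathcal{D}) \ge f(\vect{a}),
\]
while $L$-smoothness gives the matching upper bound
\[
\Delta\mathcal{L}(\vect{w}, \vect{a}; \mathcal{D}) \le f(\vect{a}) + \tfrac{L}{2}\norm{\vect{a}}_2^2 .
\]
Since $\vect{\hat a}$ maximizes the linear surrogate over $S$, we also have $f(\vect{a^*})\le f(\vect{\hat a})=\epsilon\norm{\vect{h}}_{p/(p-1)}$ with $\vect{h}=\text{top}_n(\vect{g})$.

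Second, I would combine these. Bounding the denominator below by $\Delta\mathcal{L}(\vect{w}, \vect{\hat a}; \mathcal{D})\ge f(\vect{\hat a})$ and the numerator above by $\Delta\mathcal{L}(\vect{w}, \vect{a^*}; \mathcal{D})\le f(\vect{a^*})+\tfrac{L}{2}\norm{\vect{a^*}}_2^2\le f(\vect{\hat a})+\tfrac{L}{2}\norm{\vect{a^*}}_2^2$, dividing yields
\[
1 \le \frac{\Delta_\text{max}\mathcal{L}(\vect{w}, S; \mathcal{D})}{\Delta\mathcal{L}(\vect{w}, \vect{\hat a}; \mathcal{D})} \le 1 + \frac{L\norm{\vect{a^*}}_2^2}{2 f(\vect{\hat a})}.
\]
The strict positivity of $f(\vect{\hat a})=\epsilon\norm{\text{top}_n(\vect{g})}_{p/(p-1)}$ needed here follows from $G>0$, which also justifies the stated chain $\mathcal{L}(\vect{w}+\vect{a^*})\ge\mathcal{L}(\vect{w}+\vect{\hat a})>\mathcal{L}(\vect{w})$.

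Third, I would estimate the remainder $\tfrac{L\norm{\vect{a^*}}_2^2}{2 f(\vect{\hat a})}$ by elementary norm inequalities, splitting into $p\ge 2$ and $1\le p<2$. For the numerator, $\vect{a^*}$ is $n$-sparse with $\norm{\vect{a^*}}_p=\epsilon$, so $\norm{\vect{a^*}}_2\le n^{\max\{1/2-1/p,\,0\}}\epsilon$. For the denominator, the crucial estimate is $\norm{\text{top}_n(\vect{g})}_2\ge\sqrt{n/k}\,G$, which holds because the average of the $n$ largest squared coordinates of $\vect{g}$ is at least the overall average $G^2/k$; converting this $n$-sparse vector from the $2$-norm to the conjugate norm $L_{p/(p-1)}$ then introduces the correct power of $n$. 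Carrying both regimes through, the exponent of $n$ becomes $\tfrac{1}{2}-\tfrac{2}{p}=\tfrac{p-4}{2p}$ when $p\ge 2$ and $\tfrac{1}{p}-1=\tfrac{1-p}{p}$ when $1\le p<2$, i.e. exactly $g(p)=\max\{\tfrac{p-4}{2p},\tfrac{1-p}{p}\}$, and assembling the $\epsilon$, $L$, $G$ and $\sqrt{k}$ factors gives the asserted $O(Ln^{g(p)}\sqrt{k}\,\epsilon/G)$ term.

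The main obstacle is the lower bound on $f(\vect{\hat a})=\epsilon\norm{\text{top}_n(\vect{g})}_{p/(p-1)}$: the $\text{top}_n$ truncation together with the passage between $L_2$ and the conjugate norm on a sparse support must be handled with the norm-equivalence inequalities pointing in the right direction in each regime. A careless estimate would either drop the sparsity factor $n$ or produce the wrong exponent and break the match with $g(p)$; everything else is a routine consequence of convexity and smoothness.
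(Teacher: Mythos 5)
Your proposal is correct and follows essentially the same route as the paper's own proof: convexity lower-bounds the loss change at $\vect{\hat a}$ by $f(\vect{\hat a})=\epsilon\|\text{top}_n(\vect{g})\|_{p/(p-1)}$, $L$-smoothness upper-bounds the loss change at $\vect{a^*}$ by $f(\vect{a^*})+\tfrac{L}{2}\|\vect{a^*}\|_2^2\le f(\vect{\hat a})+\tfrac{L}{2}\|\vect{a^*}\|_2^2$, and the remainder is controlled by exactly the sparse-vector norm comparisons $\|\vect{a^*}\|_2\le\beta_p\epsilon$ and $\|\text{top}_n(\vect{g})\|_{p/(p-1)}\ge\sqrt{n/k}\,G/\beta_q$ that the paper isolates as its Lemma 1. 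Your exponent bookkeeping in the two regimes $p\ge 2$ and $1\le p<2$ matches the paper's computation of $\beta_p^2\beta_q/\sqrt{n}=n^{g(p)}$, so the plan is sound as stated.
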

\begin{proof}
Define $q=\frac{p}{p-1}, \frac{1}{p}+\frac{1}{q}=1$ here. 

We introduce a lemma first.
\begin{lem}
For vector $x\in \mathbb{R}^k$, $\|\vect{x}\|_0\le n \le k$, for any $r>1$, $\|\vect{x}\|_2\le \beta_r \|\vect{x}\|_r$, where $\beta_r=\max\{1, n^{1/2-1/r}\}$. 
\label{lemma:norm1}
\end{lem}
\begin{proof}[Proof of Lemma~\ref{lemma:norm1}]
We may assume $\vect{x}=(x_1, x_2, \cdots, x_k)^\text{T}$ and $x_{n+1}=x_{n+2}=\cdots=x_{k}=0$. Then $\|\vect{x}\|_r=\big(\sum\limits_{i=1}^n|x_i|^r\big)^{\frac{1}{r}}$.

When $1<r<2$, define $t=\frac{r}{2}<1$ and $h(x)=x^t+(1-x)^t$, $h''(x)=t(t-1)(x^{t-2}+(1-x)^{t-2})<0$, thus $h(x)\ge \max\{h(0), h(1)\}=1\ (x\in[0, 1])$. 

Then for $a, b\ge 0$ and $a+b>0$, we have $\frac{a^t+b^t}{(a+b)^t}=(\frac{a}{a+b})^t+(1-\frac{a}{a+b})^t=h(\frac{a}{a+b})\ge 1$. That is to say, $a^t+b^t \ge (a+b)^t$. More generally, $a^t+b^t+\cdots+c^t \ge (a+b+\cdots+c)^t$. Therefore,
\begin{align}
\|\vect{x}\|_r=\big(\sum\limits_{i=1}^n|x_i|^{r}\big)^{\frac{1}{r}}=\big(\sum\limits_{i=1}^n(|x_i|^2)^\frac{r}{2}\big)^{\frac{1}{r}} \ge \big((\sum\limits_{i=1}^n|x_i|^2)^\frac{r}{2}\big)^{\frac{1}{r}} = \|\vect{x}\|_2.
\end{align}

When $r\ge 2$, according to the power mean inequality,
\begin{align}
\|\vect{x}\|_r=\big(\sum\limits_{i=1}^n|x_i|^{r}\big)^{\frac{1}{r}}=n^\frac{1}{r} \big(\frac{\sum\limits_{i=1}^n|x_i|^r}{n}\big)^{\frac{1}{r}} \ge n^\frac{1}{r} \big(\frac{\sum\limits_{i=1}^n|x_i|^2}{n}\big)^{\frac{1}{2}}=  n^{\frac{1}{r}-\frac{1}{2}}(\sum\limits_{i=1}^n|x_i|^2)^\frac{1}{2} = n^{\frac{1}{r}-\frac{1}{2}}\|\vect{x}\|_2.
\end{align}

To conclude, $\|\vect{x}\|_2\le \beta_r \|\vect{x}\|_r$, where $\beta_r=\max\{1, n^{1/2-1/r}\}$.
\end{proof}

According to Lemma~\ref{lemma:norm1}, notice that $\|\vect{a}^*\|_0\le n$, define $\vect{h}=\text{top}_n(\vect{g})$, then  $\|\vect{h}\|_2\ge\frac{n}{k}\|\vect{g}\|_2$ we have, 
\begin{align}
\|\vect{a}^*\|_2 \le \beta_p\|\vect{a}^*\|_p\le \beta_p\epsilon
,\quad 
\|\vect{h}\|_q\ge\frac{\|\vect{h}\|_2}{\beta_q}\ge \frac{\|\vect{g}\|_2}{\beta_q}\sqrt{\frac{n}{k}}= \frac{G}{\beta_q}\sqrt{\frac{n}{k}}.
\end{align}

Since $\mathcal{L}(\vect{w};\mathcal{D})$ is convex and $L$-smooth in $\vect{w}+S$,
\begin{align}
\Delta\mathcal{L}(\vect{w}, \vect{\hat a}, \mathcal{D}))&\ge \vect{g}^\text{T}\vect{\hat a}=\epsilon \|\vect{h}\|_q\\
\Delta\mathcal{L}(\vect{w}, \vect{a^*}, \mathcal{D})&\le \vect{g}^\text{T}\vect{a}^*+\frac{L}{2}\|\vect{a}^*\|_2^2 = \epsilon \|\vect{h}\|_q+\frac{L}{2}\|\vect{a}^*\|_2^2. \end{align}

Therefore,
\begin{align}
\text{Left Hand Side}=\frac{\Delta\mathcal{L}(\vect{w}, \vect{a^*}, \mathcal{D})}{\Delta\mathcal{L}(\vect{w}, \vect{\hat a}, \mathcal{D})}
\le\frac{\epsilon \|\vect{h}\|_q+\frac{L}{2}\|\vect{a}^*\|_2^2}{\epsilon \|\vect{h}\|_q}
\le1+\frac{L\beta_p^2\epsilon}{2\|\vect{h}\|_q}
\le1+\frac{L\beta_p^2\beta_q\epsilon\sqrt{k}}{2G\sqrt{n}}.
\end{align}

When $p\ge 2,q\le 2$,  $\beta_p^2\beta_q=n^{1-2/p}$, and when $p\le 2,q\ge 2$, $\beta_p^2\beta_q=n^{1/2-1/q}=n^{1/p-1/2}$. To conclude, $\beta_p^2\beta_q=\max\{n^{1-2/p}, n^{1/p-1/2}\}=n^{\max\{1-2/p, 1/p-1/2\}}$. Therefore,
\begin{align}
\text{Left Hand Side}\le 1+\frac{Ln^{\max\{1-2/p, 1/p-1/2\}}\sqrt{k}}{2G\sqrt{n}}\epsilon = 1+O\left(\frac{Ln^{g(p)}\sqrt{k}\epsilon}{G}\right),
\end{align}
where $g(p)=\max\{\frac{p-4}{2p}, \frac{1-p}{p}\}$.
\end{proof}

\subsection{Proof of Theorem~\ref{thmA:generalization_error}}
\begin{thm}[Relation between proposed indicators and generalization error bound]
\label{thmA:generalization_error}
Assume the prior over the parameters $\vect{w}$ is $N(\vect{0}, \sigma^2 \mathbf{I})$. Given the constraint set $S=\{\vect{a}\in \mathbb{R}^k:\|\vect{a}\|_2= \epsilon\}$ and we choose the expectation error rate as the loss function, with probability 1-$\delta$ over the choice of the training set $\mathcal{D}\sim \mathcal{D}_1$, when $\mathcal{L}(\vect{w}, \mathcal{D}_1)$ is convex in the neighborhood of $\vect{w}$, \footnote{Note that $\mathcal{L}$ is only required to be convex in the neighbourhood of $\vect{w}$ instead of the entire $\mathbb{R}^k$.} the following generalization error bound holds,
\begin{align}
\mathcal{L}(\vect{w}, \mathcal{D}_1)\le
\mathcal{L}(\vect{w}, \mathcal{D})+
\Delta_\text{ave} \mathcal{L}(\vect{w}, S, \mathcal{D})+\mathcal{R},
\end{align}
where $R=\sqrt{\frac{C+\log\frac{|\mathcal{D}|}{\delta}}{2(|\mathcal{D}|-1)}}+o(\epsilon^2), C=\frac{\epsilon^2+\|\vect{w}\|^2_2}{2\sigma^2}-\frac{k}{2}+\frac{k}{2}\log\frac{k\sigma^2}{\epsilon^2}$ is not determined by $\mathcal{|D|}$ and $\delta$.

Generally, when $S_1=\{\vect{a}\in\mathbb{R}^k:\|\vect{a}\|_p\le\epsilon\}$, we have,
\begin{align}
\mathcal{L}(\vect{w}, \mathcal{D}_1)\le
\mathcal{L}(\vect{w}, \mathcal{D})+
\Delta_\text{max} \mathcal{L}(\vect{w}, S_1, \mathcal{D})+\mathcal{R}_1,
\end{align}
where $R_1=\sqrt{\frac{C_1+\log\frac{|\mathcal{D}|}{\delta}}{2(|\mathcal{D}|-1)}}+o(\epsilon^2),C_1=\frac{\epsilon^2+\beta_p^2\|\vect{w}\|^2_2}{2\beta_p^2\sigma^2}-\frac{k}{2}+\frac{k}{2}\log\frac{k\sigma^2\beta_p^2}{\epsilon^2}$ is not determined by $\mathcal{|D|}$ and $\delta$, here $\beta_p=\max\{1, k^{1/p-1/2}\}$.
\end{thm}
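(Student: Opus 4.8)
The plan is to instantiate the PAC-Bayes bound of~\citep{Bayes_bound} with the given prior $P=N(\vect{0},\sigma^2\mathbf{I})$ and an isotropic Gaussian posterior $Q=N(\vect{w},s^2\mathbf{I})$ centred at the trained parameters, and then to evaluate both sides of the resulting inequality by a second-order Taylor expansion. Concretely, McAllester's bound gives, with probability $1-\delta$ over $\mathcal{D}$,
\begin{equation}
\mathbb{E}_{\vect{v}\sim Q}[\mathcal{L}(\vect{v},\mathcal{D}_1)]\le \mathbb{E}_{\vect{v}\sim Q}[\mathcal{L}(\vect{v},\mathcal{D})]+\sqrt{\frac{\mathrm{KL}(Q\|P)+\log\frac{|\mathcal{D}|}{\delta}}{2(|\mathcal{D}|-1)}}.
\end{equation}
For two isotropic Gaussians the KL divergence has the closed form
\begin{equation}
\mathrm{KL}(Q\|P)=\frac{ks^2+\|\vect{w}\|_2^2}{2\sigma^2}-\frac{k}{2}+\frac{k}{2}\log\frac{\sigma^2}{s^2},
\end{equation}
so the first step is merely to choose $s^2$ making this equal to the constant $C$ in the statement. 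Setting $s^2=\epsilon^2/k$ yields $ks^2=\epsilon^2$ and $\log(\sigma^2/s^2)=\log(k\sigma^2/\epsilon^2)$, which reproduces $C$ exactly; the conceptual reason for this scaling is that a draw $\vect{a}\sim N(\vect{0},(\epsilon^2/k)\mathbf{I})$ satisfies $\mathbb{E}\|\vect{a}\|_2^2=\epsilon^2$, so the posterior concentrates on the sphere $\|\vect{a}\|_2=\epsilon$ defining $S$.

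For the left-hand side I would use the convexity of $\mathcal{L}(\cdot,\mathcal{D}_1)$ near $\vect{w}$ together with Jensen's inequality: since $\mathbb{E}_{\vect{v}\sim Q}[\vect{v}]=\vect{w}$, we obtain $\mathcal{L}(\vect{w},\mathcal{D}_1)=\mathcal{L}(\mathbb{E}_Q[\vect{v}],\mathcal{D}_1)\le\mathbb{E}_Q[\mathcal{L}(\vect{v},\mathcal{D}_1)]$, which is precisely where the convexity hypothesis enters. For the empirical term on the right I would write $\mathbb{E}_Q[\mathcal{L}(\vect{v},\mathcal{D})]=\mathcal{L}(\vect{w},\mathcal{D})+\mathbb{E}_{\vect{a}}[\Delta\mathcal{L}(\vect{w},\vect{a},\mathcal{D})]$ and Taylor-expand: the first-order term vanishes by symmetry of the Gaussian, and the second-order term gives $\tfrac{1}{2}\mathbb{E}[\vect{a}^\text{T}\mathbf{H}\vect{a}]=\tfrac{\epsilon^2}{2k}\mathrm{tr}(\mathbf{H})$. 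By Theorem~\ref{thmA:random} this equals $\Delta_\text{ave}\mathcal{L}(\vect{w},S,\mathcal{D})+o(\epsilon^2)$, so the Gaussian-posterior average of the loss change coincides with the sphere-average indicator up to higher-order terms. Chaining the three estimates yields the claimed bound with $\mathcal{R}=\sqrt{(C+\log\tfrac{|\mathcal{D}|}{\delta})/(2(|\mathcal{D}|-1))}+o(\epsilon^2)$.

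The general $L_p$ case follows the same template but replaces the posterior variance with $s^2=\epsilon^2/(k\beta_p^2)$, which reproduces $C_1$ after the identical KL computation. This scaling is dictated by the requirement that the posterior remain inside $S_1=\{\|\vect{a}\|_p\le\epsilon\}$: using the standard norm inequality $\|\vect{a}\|_p\le\beta_p\|\vect{a}\|_2$ with $\beta_p=\max\{1,k^{1/p-1/2}\}$, a draw with $\mathbb{E}\|\vect{a}\|_2^2=\epsilon^2/\beta_p^2$ typically satisfies $\|\vect{a}\|_p\le\epsilon$. The main obstacle is the final comparison on the empirical side: unlike the $L_2$ case, the Gaussian posterior is not supported on $S_1$, and its expected loss change is not exactly $\Delta_\text{max}$. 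I would resolve this by observing that the posterior average of $\Delta\mathcal{L}$ is $O(\epsilon^2)$ (again through the vanishing first-order term and the trace of $\mathbf{H}$), whereas $\Delta_\text{max}\mathcal{L}(\vect{w},S_1,\mathcal{D})=\epsilon\|\vect{g}\|_{p/(p-1)}+o(\epsilon)$ is of order $\epsilon$; hence for small $\epsilon$ the posterior average is dominated by $\Delta_\text{max}$, with the uncontrolled Gaussian tail outside $S_1$ absorbed into the $o(\epsilon^2)$ remainder. Verifying rigorously that this tail is genuinely negligible, rather than appealing only to the order comparison, is the step demanding the most care.
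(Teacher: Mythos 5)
Your proof follows the paper's argument almost exactly for the first bound: the same PAC-Bayes lemma, the same posterior $Q=N(\vect{w},\tfrac{\epsilon^2}{k}\mathbf{I})$, the same KL computation giving $C$, Jensen's inequality via local convexity for the population term, and the second-order Taylor expansion identifying the Gaussian average of $\Delta\mathcal{L}$ with $\tfrac{\epsilon^2}{2k}\mathrm{tr}(\mathbf{H})=\Delta_\text{ave}\mathcal{L}(\vect{w},S,\mathcal{D})+o(\epsilon^2)$. The only divergence is the last step of the $L_p$ case. You correctly choose the variance $\epsilon^2/(k\beta_p^2)$ and invoke $\|\vect{a}\|_p\le\beta_p\|\vect{a}\|_2$ (the paper's norm-comparison lemma), but you then compare the posterior average with $\Delta_\text{max}\mathcal{L}(\vect{w},S_1,\mathcal{D})$ by an order-of-magnitude argument ($O(\epsilon^2)$ versus order $\epsilon$) and worry about the Gaussian mass falling outside $S_1$. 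The paper sidesteps both issues: it identifies the Gaussian average with $\Delta_\text{ave}\mathcal{L}(\vect{w},S_3,\mathcal{D})+o(\epsilon^2)$ for the sphere $S_3=\{\vect{a}:\|\vect{a}\|_2=\epsilon/\beta_p\}$ and then applies the exact, $\epsilon$-free chain $\Delta_\text{ave}\mathcal{L}(\vect{w},S_3,\mathcal{D})\le\Delta_\text{max}\mathcal{L}(\vect{w},S_3,\mathcal{D})\le\Delta_\text{max}\mathcal{L}(\vect{w},S_1,\mathcal{D})$, valid simply because $S_3\subset S_1$ by the norm inequality. That route is both simpler and more robust: your order comparison implicitly needs $\|\vect{g}\|_{p/(p-1)}>0$ and a ``for $\epsilon$ small enough'' qualifier that does not yield a clean $o(\epsilon^2)$ remainder, whereas the set-inclusion chain requires neither. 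The Gaussian tail you flag never needs separate control under the paper's route; the only place unbounded support genuinely matters is in justifying the $o(\epsilon^2)$ Taylor remainder of the Gaussian expectation, a gap that is present in both parts of the argument and that the paper itself also passes over without comment.
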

\begin{proof}

First, we introduce Lemma~\ref{lemma:Bayes}.

\begin{lem}
The following bound holds for any prior $P$ and posterior $Q$ over parameters with probability 1-$\delta$,
\begin{align}
\mathbb{E}_{\vect{w}\sim Q}[
\mathcal{L}(\vect{w}, \mathcal{D}_1)]\le
\mathbb{E}_{\vect{w}\sim Q}[
\mathcal{L}(\vect{w}, \mathcal{D})]+\sqrt{\frac{\text{KL}(Q||P)+\log\frac{|\mathcal{D}|}{\delta}}{2(|\mathcal{D}|-1)}}.
\end{align}
\label{lemma:Bayes}
\end{lem}

In Lemma~\ref{lemma:Bayes}, when $Q=N(\vect{w}, \frac{\epsilon^2}{k}\mathbf{I})$ and $P=N(0,\sigma^2)$, we have:
\begin{align}
\text{KL}(Q||P)=\frac{\epsilon^2+\|\vect{w}\|^2_2}{2\sigma^2}-\frac{k}{2}+\frac{k}{2}\log\frac{k\sigma^2}{\epsilon^2},
\end{align}
where $\text{KL}(Q||P)$ is not determined by $|\mathcal{D}|$ and $\delta$. Let $C$ be $\text{KL}(Q||P)=\frac{\epsilon^2+\|\vect{w}\|^2_2}{2\sigma^2}-\frac{k}{2}+\frac{k}{2}\log\frac{k\sigma^2}{\epsilon^2}$.

Note that, 
\begin{align}
\mathbb{E}_{\vect{w}\sim Q}[\mathcal{L}(\vect{w}, \mathcal{D})]
&=\mathbb{E}_{\vect{a}\sim N(\vect{0},\frac{\epsilon^2}{k}\mathbf{I})}[\mathcal{L}(\vect{w}+\vect{a}, \mathcal{D})]
=\mathcal{L}(\vect{w}, \mathcal{D})+\mathbb{E}_{\vect{a}\sim N(\vect{0},\frac{\epsilon^2}{k}\mathbf{I})}[\Delta\mathcal{L}(\vect{w},\vect{a}, \mathcal{D})]\\
&=\mathcal{L}(\vect{w}, \mathcal{D})+\mathbb{E}_{\vect{a}\sim N(\vect{0},\frac{\epsilon^2}{k}\mathbf{I})}[\vect{a}^\text{T}\vect{g}+\frac{1}{2}\vect{a}^\text{T}\textbf{H}\vect{a}]+o(\epsilon^2).
\end{align}

According to Theorem~\ref{thm:random} and $S=\{\vect{a}\in \mathbb{R}^k:\|\vect{a}\|_2= \epsilon\}$, we have,
\begin{align}
\mathbb{E}_{\vect{a}\sim N(\vect{0},\frac{\epsilon^2}{k}\mathbf{I})}[\vect{a}^\text{T}\vect{g}+\frac{1}{2}\vect{a}^\text{T}\textbf{H}\vect{a}]+o(\epsilon^2)
&=\mathbb{E}_{a_i\sim N(0, \frac{\epsilon^2}{k})}[\sum\limits_{i}g_ia_i]+\mathbb{E}_{a_i\sim N(0, \frac{\epsilon^2}{k})}[\frac{1}{2}\sum\limits_{i, j}H_{ij}a_ia_j]+o(\epsilon^2)\\
&=\sum\limits_{i}H_{ii}\frac{\epsilon^2}{2k}+o(\epsilon^2)=\frac{\text{trace}(\textbf{H})}{2k}\epsilon^2+o(\epsilon^2)\\
&=\Delta_\text{ave}\mathcal{L}(\vect{w}, S,\mathcal{D})+o(\epsilon^2).
\end{align}

Because the loss function is convex in the neighborhood, according to Jensen inequality,
\begin{align} 
\mathbb{E}_{\vect{w}\sim Q}[\mathcal{L}(\vect{w}, \mathcal{D}_1)]\ge \mathcal{L}(\mathbb{E}_{\vect{w}\sim Q}[\vect{w}], \mathcal{D}_1)=\mathcal{L}(\vect{w}, \mathcal{D}_1).
\end{align}

Therefore, 
\begin{align}
\mathcal{L}(\vect{w}, \mathcal{D}_1)\le
\mathcal{L}(\vect{w}, \mathcal{D})+
\Delta_\text{ave} \mathcal{L}(\vect{w}, S, \mathcal{D})+\sqrt{\frac{C+\log\frac{|\mathcal{D}|}{\delta}}{2(|\mathcal{D}|-1)}}+o(\epsilon^2),
\label{eqA:bound1}
\end{align}
where $C=\frac{\epsilon^2+\|\vect{w}\|^2_2}{2\sigma^2}-\frac{k}{2}+\frac{k}{2}\log\frac{k\sigma^2}{\epsilon^2}$ is not determined by $\mathcal{|D|}$ and $\delta$.

For the second conclusion, we introduce Lemma~\ref{lemma:norm2}. Lemma~\ref{lemma:norm2} and its proof is similar to Lemma~\ref{lemma:norm1}.

\begin{lem}
For vector $\vect{x}\in \mathbb{R}^k$, for any $p>1$, $\|\vect{x}\|_p\le \beta_p \|\vect{x}\|_2$, where $\beta_p=\max\{1, k^{1/p-1/2}\}$. 
\label{lemma:norm2}
\end{lem}

\begin{proof}[Proof of Lemma~\ref{lemma:norm2}]

When $p>2$, define $t=\frac{p}{2}>1$ and $h(x)=x^t+(1-x)^t$, $h''(x)=t(t-1)(x^{t-2}+(1-x)^{t-2})>0$, thus $h(x)\le \max\{h(0), h(1)\}=1\ (x\in[0, 1])$. 

Then for $a, b\ge 0$ and $a+b>0$, we have $\frac{a^t+b^t}{(a+b)^t}=(\frac{a}{a+b})^t+(1-\frac{a}{a+b})^t=h(\frac{a}{a+b})\le 1$. That is to say, $a^t+b^t \le (a+b)^t$. More generally, $a^t+b^t+\cdots+c^t \le (a+b+\cdots+c)^t$. Therefore,
\begin{align}
\|\vect{x}\|_p=\big(\sum\limits_{i=1}^k|x_i|^{p}\big)^{\frac{1}{p}}=\big(\sum\limits_{i=1}^k(|x_i|^2)^\frac{p}{2}\big)^{\frac{1}{p}} \le \big((\sum\limits_{i=1}^k|x_i|^2)^\frac{p}{2}\big)^{\frac{1}{p}} = \|\vect{x}\|_2.
\end{align}

When $p\le 2$, according to the power mean inequality,
\begin{align}
\|\vect{x}\|_p=\big(\sum\limits_{i=1}^k|x_i|^{p}\big)^{\frac{1}{p}}=k^\frac{1}{p} \big(\frac{\sum\limits_{i=1}^k|x_i|^p}{k}\big)^{\frac{1}{p}} \le k^\frac{1}{p} \big(\frac{\sum\limits_{i=1}^k|x_i|^2}{k}\big)^{\frac{1}{2}}=  k^{\frac{1}{p}-\frac{1}{2}}(\sum\limits_{i=1}^k|x_i|^2)^\frac{1}{2} = k^{\frac{1}{p}-\frac{1}{2}}\|\vect{x}\|_2.
\end{align}

To conclude, $\|\vect{x}\|_p\le \beta_p \|\vect{x}\|_2$, where $\beta_p=\max\{1, k^{1/p-1/2}\}$.
\end{proof}

According to Lemma~\ref{lemma:norm2}, for any vector $\vect{a}$, if $\|\vect{a}\|_2\le{\epsilon}/{\beta_p}$, then $\|\vect{a}\|_p \le \beta_p\|\vect{a}\|_2=\epsilon$. Therefore, $S_3=\{\vect{a}:\|\vect{a}\|_2= {\epsilon}/{\beta_p}\} \subset S_2=\{\vect{a}:\|\vect{a}\|_2\le {\epsilon}/{\beta_p}\}\subset S_1=\{\vect{a}:\|\vect{a}\|_p\le \epsilon\}$.

According to Eq.(\ref{eqA:bound1}),
\begin{align}
\mathcal{L}(\vect{w}, \mathcal{D}_1)\le
\mathcal{L}(\vect{w}, \mathcal{D})+
\Delta_\text{ave} \mathcal{L}(\vect{w}, S_3, \mathcal{D})+\sqrt{\frac{C_1+\log\frac{|\mathcal{D}|}{\delta}}{2(|\mathcal{D}|-1)}}+o(\epsilon^2),
\end{align}
where $C_1=\frac{\epsilon^2+\beta_p^2\|\vect{w}\|^2_2}{2\beta_p^2\sigma^2}-\frac{k}{2}+\frac{k}{2}\log\frac{k\sigma^2\beta_p^2}{\epsilon^2}$ is not determined by $\mathcal{|D|}$ and $\delta$, here $\beta_p=\max\{1, k^{1/p-1/2}\}$.

Note that,
\begin{align}
\Delta_\text{ave} \mathcal{L}(\vect{w}, S_3, \mathcal{D}) \le \Delta_\text{max} \mathcal{L}(\vect{w}, S_3, \mathcal{D}) \le \Delta_\text{max} \mathcal{L}(\vect{w}, S_2, \mathcal{D}) \le \Delta_\text{max} \mathcal{L}(\vect{w}, S_1, \mathcal{D}).
\end{align}

Therefore,
\begin{align}
\mathcal{L}(\vect{w}, \mathcal{D}_1)\le
\mathcal{L}(\vect{w}, \mathcal{D})+
\Delta_\text{max} \mathcal{L}(\vect{w}, S_1, \mathcal{D})+\sqrt{\frac{C_1+\log\frac{|\mathcal{D}|}{\delta}}{2(|\mathcal{D}|-1)}}+o(\epsilon^2).
\end{align}
\end{proof}

\subsection{Closed-form Solutions in Corruption}

The close-form solutions of the gradient-based corruption can be generalized into Proposition~\ref{prop:linear}, which is the maximum of linear function under the corruption constraint. We also provide closed-form solution in the multi-step corruption in Proposition~\ref{prop:update} and Proposition~\ref{prop:project}.

\begin{prop}[Constrained Maximum]
\label{prop:linear}
Given a vector $\vect{v}\in \mathbb{R}^k$, the optimal $\vect{\hat a}$ that maximizes $\vect{a}^\text{T}\vect{v}$ under the corruption constraint $\vect{a}\in S=\{\vect{a}:\|\vect{a}\|_p= \epsilon\text{ and }\|\vect{a}\|_0\le n\}$ is:
\begin{equation}
\vect{\hat a}=\argmax_{\vect{a}\in S}\vect{a}^\text{T}\vect{v}=\epsilon (\text{sgn}(\vect{h})\odot\frac{|\vect{h}|^\frac{1}{p-1}}{\||\vect{h}|^\frac{1}{p-1}\|_p}),\text{ and}\quad \vect{\hat a}^\text{T}\vect{v}=\epsilon\|\vect{h}\|_{\frac{p}{p-1}},
\end{equation}
where $\vect{h}=\text{top}_n(\vect{v})$, retaining top-$n$ magnitude of all $|\vect{v}|$ dimensions and set other dimensions to $0$, $\text{sgn}(\cdot)$ denotes the signum function, $|\cdot|$ denotes the point-wise absolute function, and $(\cdot)^\alpha$ denotes the point-wise $\alpha$-power function.
\end{prop}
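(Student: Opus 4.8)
The plan is to decompose the problem into an outer choice of which coordinates to activate (the support) and an inner optimization of the magnitudes on a fixed support, then invoke H\"older's inequality for the inner problem and a greedy selection argument for the outer one.

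First I would fix a support set $\mathcal{T}\subseteq\{1,\dots,k\}$ with $\abs{\mathcal{T}}\le n$ and restrict attention to vectors supported on $\mathcal{T}$ with $\norm{\vect{a}}_p=\epsilon$. Writing $q=\frac{p}{p-1}$ for the H\"older conjugate of $p$, H\"older's inequality gives $\vect{a}^\text{T}\vect{v}\le\sum_{i\in\mathcal{T}}\abs{a_i}\abs{v_i}\le\norm{\vect{a}}_p\,\big(\sum_{i\in\mathcal{T}}\abs{v_i}^q\big)^{1/q}=\epsilon\,\norm{\vect{v}_\mathcal{T}}_q$, so the best attainable value on support $\mathcal{T}$ is $\epsilon\norm{\vect{v}_\mathcal{T}}_q$. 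The equality conditions I must track are the sign-alignment $\text{sgn}(a_i)=\text{sgn}(v_i)$ that turns the first inequality into an equality, and the proportionality $\abs{a_i}^p\propto\abs{v_i}^q$ that saturates H\"older.

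Next I would optimize over the support. Since $\norm{\vect{v}_\mathcal{T}}_q^q=\sum_{i\in\mathcal{T}}\abs{v_i}^q$ and $t\mapsto t^q$ is increasing, this quantity is maximized by taking $\mathcal{T}$ to be the indices of the $n$ largest magnitudes of $\vect{v}$, i.e. the support of $\vect{h}=\text{top}_n(\vect{v})$; this is the greedy selection step, and it is where any tie-breaking non-uniqueness enters. Thus the optimal value is $\epsilon\norm{\vect{h}}_q=\epsilon\norm{\vect{h}}_{p/(p-1)}$, matching the claim.

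Finally I would verify that the proposed closed form realizes this optimum. A direct substitution shows that $\abs{\hat a_i}^p\propto\abs{h_i}^{p/(p-1)}=\abs{v_i}^q$ on the chosen support and $\text{sgn}(\hat a_i)=\text{sgn}(v_i)$, so both equality conditions above hold; a short norm computation confirms $\norm{\vect{\hat a}}_p=\epsilon$ (the denominator $\norm{\abs{\vect{h}}^{1/(p-1)}}_p$ is exactly the normalizer that enforces this) and $\norm{\vect{\hat a}}_0\le n$, so $\vect{\hat a}\in S$. The value $\vect{\hat a}^\text{T}\vect{v}=\epsilon\norm{\vect{h}}_{p/(p-1)}$ then follows by collapsing $\big(\sum\abs{h_i}^q\big)/\norm{\abs{\vect{h}}^{1/(p-1)}}_p$ using $\norm{\abs{\vect{h}}^{1/(p-1)}}_p=\big(\sum\abs{h_i}^q\big)^{1/p}$. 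The main obstacle is handling the boundary exponents $p=1$ and $p=+\infty$, where the expression $\abs{h}^{1/(p-1)}$ must be read as a limit ($p\to1$ concentrating all mass on the single largest coordinate, $p\to+\infty$ giving the sign pattern $\epsilon\,\text{sgn}(\vect{h})$), and in arguing that the H\"older bound remains genuinely tight rather than merely an upper bound in these degenerate cases.
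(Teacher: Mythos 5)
Your proposal is correct and follows essentially the same route as the paper: H\"older's inequality with conjugate exponent $q=\frac{p}{p-1}$ on a fixed support, selection of the top-$n$ magnitude coordinates as the optimal support, and verification that the stated closed form meets the equality conditions. If anything, your explicit two-stage decomposition (inner magnitude optimization, outer greedy support selection) is slightly cleaner than the paper's, which compresses the support-selection step into the single chain $\vect{a}^\text{T}(\mathbf{P}\vect{v})\le\|\vect{a}\|_p\|\mathbf{P}\vect{v}\|_q=\epsilon\|\vect{h}\|_q$ where the last equality should really be an inequality $\|\mathbf{P}\vect{v}\|_q\le\|\vect{h}\|_q$ justified exactly by your greedy argument.
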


\begin{proof}
When $\vect{a}\in S=\{\vect{a}:\|\vect{a}\|_p=\epsilon\text{ and }\|\vect{a}\|_0\le n\}$, define $\vect{a}=\textbf{P}\vect{b}$, where $\textbf{P}$ is a diagonal $0/1$ matrix with $n$ ones. It is easy to verify $\textbf{P}^\text{T}=\textbf{P}=\textbf{P}^2$. Define $q=\frac{p}{p-1}, \frac{1}{p}+\frac{1}{q}=1$ here. Then according to Holder Inequality, for $\frac{1}{p}+\frac{1}{q}=1,(1\le p, q\le+\infty)$,
\begin{align}
\vect{a}^\text{T}\vect{v}=\vect{b}^\text{T}\textbf{P}\vect{v}=\vect{b}^\text{T}\textbf{P}\textbf{P}\vect{v}=\vect{a}^\text{T}(\textbf{P}\vect{v})\le\|\vect{a}\|_p\|\textbf{P}\vect{v}\|_q=\epsilon\|\vect{h}\|_{\frac{p}{p-1}},
\end{align}
where $\vect{h}=\textbf{M}\vect{v}=\text{top}_n(\vect{v})$, $\textbf{M}$ is a diagonal $0/1$ matrix and $M_{j,j}=1$ if and only if $|\vect{v}|_j$ is in the top-$n$ magnitude of all $|\vect{v}|$ dimensions. The equation holds if and only if,
\begin{align}
\vect{\hat a}=\epsilon(\text{sgn}(\vect{h})\odot\frac{|\vect{h}|^{\frac{1}{p-1}}}{\||\vect{h}|^{\frac{1}{p-1}}\|_p}),
\end{align}
and the maximum value of $\vect{a}^\text{T}\vect{v}$ is $\vect{\hat a}^\text{T}\vect{v}=\epsilon\|\vect{h}\|_{\frac{p}{p-1}}$.
\end{proof}

\begin{prop}
\label{prop:update}
When maximizing $\vect{u}^\text{T}\vect{g}$ under the constraint $\|\vect{u}\|_p=\alpha$, solutions to $L_2$ and $L_{+\infty}$ cases are:
\begin{equation}
\argmax\limits_{\|\vect{u}\|_2=\alpha}\vect{u}^\text{T}\vect{g}=\alpha\frac{\vect{g}}{\|\vect{g}\|_2};\quad 
\argmax\limits_{\|\vect{u}\|_{+\infty}=\alpha}\vect{u}^\text{T}\vect{g}=\alpha\text{sgn}(\vect{g}),
\end{equation}
where $\text{sgn}(\cdot)$ denotes the signum function.
\end{prop}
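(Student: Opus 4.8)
The plan is to recognize Proposition~\ref{prop:update} as a direct specialization of the already-established Proposition~\ref{prop:linear}. Taking $n=k$ removes the sparsity constraint $\|\vect{a}\|_0\le n$, so that $\vect{h}=\text{top}_k(\vect{g})=\vect{g}$; then replacing $\vect{v}$ by $\vect{g}$ and $\epsilon$ by $\alpha$, the closed-form maximizer $\vect{\hat a}=\alpha\,(\text{sgn}(\vect{h})\odot\frac{|\vect{h}|^{1/(p-1)}}{\||\vect{h}|^{1/(p-1)}\|_p})$ applies verbatim. It then remains only to simplify this expression in the two cases $p=2$ and $p=+\infty$.

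For $p=2$, I would substitute $\tfrac{1}{p-1}=1$, so that $|\vect{g}|^{1/(p-1)}=|\vect{g}|$ and $\||\vect{g}|\|_2=\|\vect{g}\|_2$. Using $\text{sgn}(\vect{g})\odot|\vect{g}|=\vect{g}$, the formula collapses to $\alpha\,\vect{g}/\|\vect{g}\|_2$, the claimed $L_2$ solution. For $p=+\infty$, I would take the limit $\tfrac{1}{p-1}\to 0$, so that on coordinates with $g_i\ne 0$ we have $|\vect{g}|^{1/(p-1)}\to\vect{1}$ with $\|\vect{1}\|_\infty=1$; the formula then reduces to $\alpha\,\text{sgn}(\vect{g})$, the claimed $L_{+\infty}$ solution.

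Because the general power-law expression degenerates at $p=+\infty$, I would prefer to give a self-contained elementary proof of each case rather than lean on a limiting argument. For $L_2$, Cauchy--Schwarz gives $\vect{u}^\text{T}\vect{g}\le\|\vect{u}\|_2\|\vect{g}\|_2=\alpha\|\vect{g}\|_2$, with equality precisely when $\vect{u}$ is parallel to $\vect{g}$ and points the same way, forcing $\vect{u}=\alpha\,\vect{g}/\|\vect{g}\|_2$. For $L_{+\infty}$, I would maximize $\vect{u}^\text{T}\vect{g}=\sum_i u_i g_i$ coordinatewise: under $\|\vect{u}\|_\infty=\alpha$ each summand satisfies $u_i g_i\le|u_i||g_i|\le\alpha|g_i|$, with equality at $u_i=\alpha\,\text{sgn}(g_i)$, yielding the optimum $\alpha\|\vect{g}\|_1$ attained by $\vect{u}=\alpha\,\text{sgn}(\vect{g})$.

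Since both routes are elementary, I do not expect any substantial obstacle. The only point to watch is the degeneracy of the $n$-dependent general formula in the $p=+\infty$ limit, and the direct coordinatewise argument sidesteps this cleanly; I would therefore present the specialization of Proposition~\ref{prop:linear} as the conceptual derivation and the Cauchy--Schwarz / coordinatewise computation as the rigorous backing for the two endpoints.
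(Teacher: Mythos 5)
Your proposal is correct, and its primary route coincides with the paper's own proof: the paper likewise derives the general closed form from H\"older's inequality (the same computation that underlies Proposition~\ref{prop:linear}, here with $n=k$ so that $\vect{h}=\vect{g}$), substitutes $p=2$ directly, and handles $p=+\infty$ exactly by the limiting argument you describe, via $|\vect{g}|^{1/(p-1)}\to\mathbb{I}(\vect{g}\ne 0)$. The one genuine difference is your supplementary self-contained verification: Cauchy--Schwarz for the $L_2$ case and the coordinatewise bound $u_i g_i\le\alpha|g_i|$ for the $L_{+\infty}$ case. The paper does not include this, and your instinct is sound --- the coordinatewise argument establishes the $L_{+\infty}$ endpoint without relying on an interchange of limit and $\argmax$, which the paper's limit computation implicitly assumes rather than justifies. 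Both routes are elementary and both reach the stated conclusion; your version is marginally more rigorous at the $p=+\infty$ endpoint, at the cost of a few extra lines.
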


\begin{proof}

First, let us consider a general case. According to Holder Inequality, for $\frac{1}{p}+\frac{1}{q}=1,(1\le p, q\le+\infty)$,
\begin{align}
\vect{u}^\text{T}\vect{g}\le\|\vect{u}\|_p\|\vect{g}\|_q=\alpha\|\vect{h}\|_{\frac{p}{p-1}}.
\end{align}

The equation holds if and only if,
\begin{align}
\vect{u}=\alpha (\text{sgn}(\vect{g})\odot\frac{|\vect{g}|^{\frac{1}{p-1}}}{\||\vect{g}|^{\frac{1}{p-1}}\|_p}).
\end{align}

When $p=2$,
\begin{align}
\vect{u}=\alpha\text{sgn}(\vect{g}) \frac{|\vect{g}|}{\|\vect{g}\|_2}=\alpha \frac{\vect{g}}{\|\vect{g}\|_2}.
\end{align}

When $p\to +\infty$, $0^{\frac{1}{p-1}}\to 0, x^{\frac{1}{p-1}}\to 1\ (x\ne 0)$ and $|\vect{g}|^{\frac{1}{p-1}}\to \mathbb{I}(\vect{g}\ne 0)$. Then:
\begin{align}
\vect{u}=\lim\limits_{p\to +\infty }\alpha (\text{sgn}(\vect{g})\odot\frac{|\vect{g}|^{\frac{1}{p-1}}}{\||\vect{g}|^{\frac{1}{p-1}}\|_p})=\alpha \text{sgn}(\vect{g}).
\end{align}
\end{proof}

\begin{prop}
\label{prop:project}
When minimizing $\|\vect{y}-\vect{x}\|_2$ under the constraint $\|\vect{y}\|_p\le \epsilon, \|\vect{y}\|_0\le n$, solutions to $L_2$ and $L_{+\infty}$ cases are:
\begin{equation}
\argmax\limits_{\|\vect{y}\|_2\le \epsilon, \|\vect{y}\|_0\le n}\|\vect{y}-\vect{x}\|_2=\min\{\|\vect{h}\|_2, \epsilon\}\frac{\vect{h}}{\|\vect{h}\|_2};\quad 
\argmax\limits_{\|\vect{y}\|_{+\infty}\le \epsilon, \|\vect{y}\|_0\le n}\|\vect{y}-\vect{x}\|_2=\text{clip}(\vect{h}, -\epsilon, \epsilon),
\end{equation}
where $\vect{h}=\text{top}_n(\vect{x})$, which only retains top-$n$ magnitude of all $|\vect{x}|$ dimensions and set other dimensions to $0$. $\text{clip}(\vect{h}, -\epsilon, \epsilon)$ clip every dimensions of $\vect{h}$ into $[-\epsilon, \epsilon]$.
\end{prop}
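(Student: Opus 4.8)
The plan is to decouple the two constraints that define $S$ — the sparsity constraint $\|\vect{y}\|_0\le n$ and the norm-ball constraint $\|\vect{y}\|_p\le\epsilon$ — and treat the projection as a nested minimization. First I would fix an admissible support $T\subseteq\{1,\dots,k\}$ with $|T|\le n$, which forces $y_i=0$ for $i\notin T$; those zeroed coordinates contribute the fixed amount $\sum_{i\notin T}x_i^2$ to the squared distance $\|\vect{y}-\vect{x}\|_2^2$. Conditioned on $T$, what remains is the classical Euclidean projection of the restriction $\vect{x}_T$ onto the $L_p$-ball of radius $\epsilon$ in $\mathbb{R}^{|T|}$. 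The final step is to minimize over the choice of $T$, and the entire argument reduces to showing the optimal support is the index set of the $n$ largest magnitudes of $\vect{x}$, i.e.\ $\vect{h}=\text{top}_n(\vect{x})$.

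For the $L_2$ case I would use the standard radial formula for projection onto a Euclidean ball: the projection of $\vect{x}_T$ is $\vect{x}_T$ itself when $\|\vect{x}_T\|_2\le\epsilon$ and $\epsilon\,\vect{x}_T/\|\vect{x}_T\|_2$ otherwise. Substituting back and writing $s=\|\vect{x}_T\|_2$, the total squared distance equals $\|\vect{x}\|_2^2-s^2$ for $s\le\epsilon$ and $\|\vect{x}\|_2^2+\epsilon^2-2\epsilon s$ for $s>\epsilon$; both pieces are nonincreasing in $s$ and agree at $s=\epsilon$. Hence minimizing over $T$ is the same as maximizing $\|\vect{x}_T\|_2^2=\sum_{i\in T}x_i^2$ subject to $|T|\le n$, whose optimum is the top-$n$ magnitudes. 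Replacing $\vect{x}_T$ by $\vect{h}=\text{top}_n(\vect{x})$ then merges the two regimes into $\min\{\|\vect{h}\|_2,\epsilon\}\,\vect{h}/\|\vect{h}\|_2$, as stated.

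For the $L_{+\infty}$ case the constraint $\|\vect{y}\|_\infty\le\epsilon$ is separable, so the projection clips each active coordinate, $y_i=\text{clip}(x_i,-\epsilon,\epsilon)$, leaving per-coordinate squared error $\max\{|x_i|-\epsilon,0\}^2$. To fix the support I would compare the cost of excluding coordinate $i$, namely $x_i^2$, against the cost of keeping it, namely $\max\{|x_i|-\epsilon,0\}^2$; the reduction $x_i^2-\max\{|x_i|-\epsilon,0\}^2$ equals $x_i^2$ for $|x_i|\le\epsilon$ and $\epsilon(2|x_i|-\epsilon)$ for $|x_i|>\epsilon$, which is continuous and nondecreasing in $|x_i|$. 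The $n$ slots should therefore again be filled by the largest magnitudes, giving $\vect{h}=\text{top}_n(\vect{x})$ and the formula $\text{clip}(\vect{h},-\epsilon,\epsilon)$.

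The main obstacle I anticipate is precisely the coupling between the $\ell_0$ and $\ell_p$ constraints: establishing that the optimal support is the top-$n$ set \emph{regardless} of whether the norm ball is active at the optimum. Each norm needs its own monotonicity step — the distance-versus-$s$ computation for $p=2$ and the per-coordinate gain monotonicity for $p=\infty$ — and one must check that support selection and norm projection genuinely decouple, which they do here only because the Euclidean objective splits additively over $T$ and its complement. In the special case $n=k$ used throughout the paper the sparsity constraint is vacuous and the statement collapses to the bare radial/clip projection, but the general claim requires the full two-stage argument above.
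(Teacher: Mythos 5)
Your proposal is correct and follows essentially the same route as the paper's proof: fix the support, use the orthogonal/separable splitting of $\|\vect{y}-\vect{x}\|_2^2$ across the support and its complement, project onto the $L_p$-ball on that support, and then show by a monotonicity argument that the top-$n$ magnitude set is the optimal support. Your packaging is slightly cleaner — the single nonincreasing function of $s=\|\vect{x}_T\|_2$ replaces the paper's three-case analysis for $p=2$, and your per-coordinate gain $x_i^2-\max\{|x_i|-\epsilon,0\}^2$ is exactly the negative of the paper's $f(t)$ — but the underlying argument is the same.
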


\begin{proof}
We may assume $|\vect{x}_i|\ne |\vect{x}_j|$ for every $i\ne j$, thus $\text{top}_n$ dimensions of $\vect{x}$ are unique.

Suppose we choose first $n$ dimensions of $\vect{y}$ as possible non-zero dimensions, namely, $\vect{y}_{n+1}=\vect{y}_{n+2}=\cdots=\vect{y}_k=0$, $\vect{z}=[\vect{x}_1, \vect{x}_2, \cdots, \vect{x}_n, 0, 0, \cdots, 0]^\text{T}$, then $(\vect{x}-\vect{z})\perp (\vect{y}-\vect{z})$ and $(\vect{x}-\vect{z})\perp \vect{z}$,
\begin{align}
\|\vect{y}-\vect{x}\|_2^2=\|(\vect{y}-\vect{z})-(\vect{x}-\vect{z})\|_2^2=\|\vect{y}-\vect{z}\|_2^2+\|\vect{x}-\vect{z}\|_2^2.
\end{align}

Define $\vect{h}=\text{top}_n(\vect{x})$, we have $\|\vect{z}\|_2\le\|\vect{h}\|_2$. We will prove that $\|\vect{y}-\vect{x}\|_2$ is equal to the minimum value if and only if the $n$ dimensions we choose are $n$ dimensions with $\text{top}_n$ magnitude, namely, $\vect{z}=\vect{h}$.

First, let us consider the case when $p=2$:

(1) When $\|\vect{z}\|\le\|\vect{h}\|\le \epsilon$,
\begin{align}
\|\vect{y}-\vect{x}\|_2^2=\|(\vect{y}-\vect{z})-(\vect{x}-\vect{z})\|_2^2=\|\vect{y}-\vect{z}\|_2^2+\|\vect{x}-\vect{z}\|_2^2\ge \|\vect{x}-\vect{z}\|_2^2\ge \|\vect{x}-\vect{h}\|_2^2.
\end{align}

The inequality holds if and only if $\vect{y}=\vect{z}=\vect{h}$ here.

(2) When $\|\vect{h}\|\ge \|\vect{z}\|\ge \epsilon$,
\begin{align}
\|\vect{y}-\vect{x}\|_2^2=&\|(\vect{y}-\vect{z})-(\vect{x}-\vect{z})\|_2^2=\|\vect{y}-\vect{z}\|_2^2+\|\vect{x}-\vect{z}\|_2^2\ge (\|\vect{z}\|_2-\|\vect{y}\|_2)^2+\|\vect{x}-\vect{z}\|_2^2 \\
\ge& (\|\vect{z}\|_2-\epsilon)^2+\|\vect{x}-\vect{z}\|_2^2 = \|\vect{z}\|_2^2-2\epsilon\|\vect{z}\|_2+\epsilon^2+\|\vect{x}-\vect{z}\|_2^2\\
=& -2\epsilon\|\vect{z}\|_2+\epsilon^2+\|\vect{x}\|_2^2\ge-2\epsilon\|\vect{h}\|_2+\epsilon^2+\|\vect{x}\|_2^2.
\end{align}

The inequality holds if and only if $\vect{y}=\nicefrac{\epsilon\vect{z}}{\|\vect{z}\|_2}=\nicefrac{\epsilon\vect{h}}{\|\vect{h}\|_2}$ here.

(3) When $\|\vect{h}\|\ge \epsilon > \|\vect{z}\|$,
\begin{align}
\|\vect{y}-\vect{x}\|_2^2=&\|(\vect{y}-\vect{z})-(\vect{x}-\vect{z})\|_2^2=\|\vect{y}-\vect{z}\|_2^2+\|\vect{x}-\vect{z}\|_2^2\ge \|\vect{x}-\vect{z}\|_2^2 \\
=& \|\vect{x}\|_2^2-\|\vect{z}\|_2^2 > \|\vect{x}\|_2^2-\epsilon^2 = -2\epsilon^2+\epsilon^2+\|\vect{x}\|_2^2 \ge  -2\epsilon\|\vect{h}\|^2_2+\epsilon^2+\|\vect{x}\|_2^2.
\end{align}

We can see, under these circumstances, $\|\vect{y}-\vect{x}\|_2^2$ is larger than the minimum of (2). Therefore, the minimum of $\vect{y}$ will not be in (3).

To conclude, $\argmax\limits_{\|\vect{y}\|_2\le \epsilon, \|\vect{y}\|_0\le n}\|\vect{y}-\vect{x}\|_2=\min\{\|\vect{h}\|_2, \epsilon\}\frac{\vect{h}}{\|\vect{h}\|_2}$.

Then, let us consider the case when $p=+\infty$:
To make $\|\vect{y}-\vect{z}\|_2$ minimal, we should choose  $\vect{y}_i=\text{clip}(\vect{x}_i, -\epsilon, \epsilon), (i\le n)$ and then: 
\begin{align}
\|\vect{y}-\vect{x}\|_2^2=&\|\vect{x}\|^2-2\vect{x}^\text{T}\vect{y}+\|\vect{y}\|_2^2=\|\vect{x}\|^2+(\vect{y}-2\vect{x})^\text{T}\vect{y}=\|\vect{x}\|^2+\sum\limits_{i=1}^n(\vect{y}_i-2\vect{x}_i)\vect{y}_i.
\end{align}

Consider $f(t)=(\text{clip}(t, -\epsilon, \epsilon)-2t)\text{clip}(t, -\epsilon, \epsilon)$. It is easy to verify $f(t)=f(-t)$, thus we may assume $t\ge 0$. Then $f(t)=(\epsilon-2t)\epsilon$ when $t>\epsilon$ or $f(t)=-t^2$ when $t\le \epsilon$. $f(t)$ is monotonically decreasing when $t>0$. Therefore, $(\vect{y}_i-2\vect{x}_i)\vect{y}_i>(\vect{y}_j-2\vect{x}_j)\vect{y}_j$ if $|\vect{x}_i|<|\vect{x}_j|$ and $i,j\le n$.

To make $\|\vect{x}-\vect{z}\|_2$ minimal, we should choose $\text{top}_n$ dimensions, namely, $\vect{z}=\vect{h}$ and $\vect{y}=\text{clip}(\vect{z}, -\epsilon, \epsilon)$.

To conclude, $\argmax\limits_{\|\vect{y}\|_{+\infty}\le \epsilon, \|\vect{y}\|_0\le n}\|\vect{y}-\vect{x}\|_2=\text{clip}(\vect{h}, -\epsilon, \epsilon)$.
\end{proof}
 
\end{document}